\documentclass{article} 
\usepackage[preprint]{goodfire}

\usepackage{microtype}
\usepackage{hyperref}
\usepackage{url}
\usepackage{booktabs}
\usepackage{amsmath}
\usepackage{macros}
\usepackage{lineno}
\usepackage[color=pink!20,textsize=scriptsize]{todonotes}

\hypersetup{colorlinks=true, citecolor=primary, linkcolor=primary, urlcolor=primary}

\providecommand{\aut}[1]{\textbf{#1}}
\providecommand{\af}[1]{{\small #1}}

\providecommand{\afn}[1]{\textcolor{primary}{$^{#1}$}}

\newcommand{\costar}{\textcolor{secondary}{\boldsymbol{\star}}}
\newcommand{\colead}{\textcolor{secondary}{\boldsymbol{\dagger}}}

\newcommand{\goodfiremark}{\raisebox{0.5pt}{\hspace{0.5mm}\includegraphics[height=6pt]{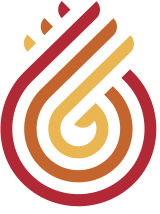}}}

\newcommand{\goodfireaff}{%
  \includegraphics[height=14pt]{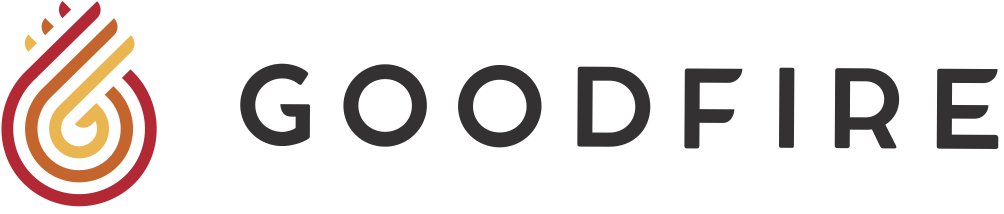}
}

\newcommand{\authorentry}[2]{\aut{#1}\afn{#2}}
\newcommand{\afflabel}[2]{\afn{#1}\af{#2}}
\newcommand{\authorsep}{\quad}
\newcommand{\repolink}[1]{%
  {\small
    \href{#1}{\raisebox{-2.8pt}{\includegraphics[height=10pt]{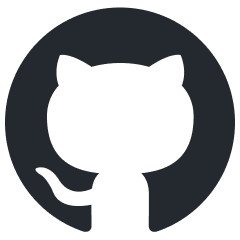}}%
    \texttt{\textcolor{secondary}{#1}}}
  }%
}

\newtoggle{goodfireonly}
\togglefalse{goodfireonly}  
\newcommand{\extline}[1]{\iftoggle{goodfireonly}{}{#1}}
\newcommand{\extaff}[1]{\iftoggle{goodfireonly}{}{#1}}

\newcommand{\paperauthors}{%
\authorentry{Usha Bhalla}{\costar\goodfiremark\extaff{,a}} \authorsep
\authorentry{Thomas Fel}{\costar\goodfiremark} \authorsep
\authorentry{Can Rager}{\goodfiremark} \\
\vspace{2pt}
\authorentry{Sheridan Feucht}{\goodfiremark\extaff{,b}} \authorsep
\authorentry{Tal Haklay}{\goodfiremark\extaff{,c}} \authorsep
\authorentry{Daniel Wurgaft}{\goodfiremark\extaff{,d}} \authorsep
\authorentry{Siddharth Boppana}{\goodfiremark} \\
\vspace{2pt}
\authorentry{Matthew Kowal}{\goodfiremark} \authorsep
\authorentry{Vasudev Shyam}{\goodfiremark} \authorsep
\authorentry{Owen Lewis}{\goodfiremark} \authorsep
\authorentry{Thomas McGrath}{\goodfiremark} \authorsep
\\
\vspace{2pt}
\authorentry{Jack Merullo}{\goodfiremark} \authorsep
\authorentry{Atticus Geiger}{\colead\goodfiremark} \authorsep
\authorentry{Ekdeep Singh Lubana}{\colead\goodfiremark}
\vspace{4pt}\\
\textcolor{secondary}{$^{\boldsymbol{\star}}$}\af{Equal contribution} \authorsep
\vspace{2mm}
\textcolor{secondary}{$^{\boldsymbol{\dagger}}$}\af{Equal senior contribution} \\
\goodfireaff \\
\extline{%
  \afflabel{a}{Harvard University} \authorsep
  \afflabel{b}{Northeastern University} \authorsep
  \afflabel{c}{Technion IIT} \authorsep
  \afflabel{d}{Stanford University}
}
\vspace{3mm}\\
\repolink{~https://github.com/goodfire-ai/sae-manifold}
\vspace{-4mm}
}
\author{\paperauthors}

\title{Do Sparse Autoencoders Capture Concept Manifolds?
}

\begin{document}


\maketitle

\vspace{-4mm}
\begin{abstract}
Sparse autoencoders (SAEs) are widely used to extract interpretable features from neural network representations, often under the implicit assumption that concepts correspond to independent linear directions. 
However, a growing body of evidence suggests that many concepts are instead organized along low-dimensional manifolds encoding continuous geometric relationships. 
This raises three basic questions: what does it mean for an SAE to capture a manifold, when do existing SAE architectures do so, and how?
We develop a theoretical framework that answers these questions and show that SAEs can capture manifolds in two fundamentally different ways: \emph{globally}, by allocating a 
compact group of atoms whose linear span contains the entire manifold, or \emph{locally}, by distributing it across features that each selectively \textit{tile} a restricted region of the underlying geometry.
Empirically, we find that SAEs suboptimally recover continuous structures, 
mixing the global subspace and local tiling solutions in a fragmented regime we call \emph{dilution}.
This explains why manifold structure is rarely visible at the level of individual concepts and motivates post-hoc unsupervised discovery methods that search for coherent groups of atoms rather than isolated directions.
More broadly, our results suggest that future representation learning methods should treat geometric objects, not just individual directions, as the basic units of interpretability.
\end{abstract}

\section{Introduction}

\begin{wrapfigure}{r}{0.47\textwidth}
    \vspace{-8mm}
    \centering
    \includegraphics[width=0.97\linewidth]{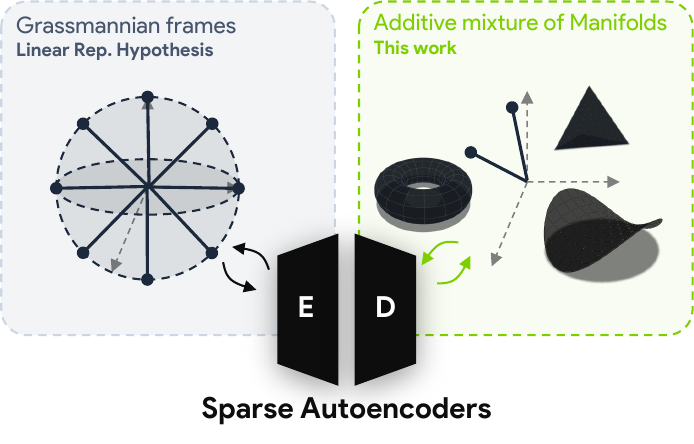}
    \vspace{-4mm}
    \caption{\small{\textbf{From directions to manifolds.} Under the linear representation hypothesis, concepts correspond to individual directions in activation space, packed as a Grassmannian frame~\citep{strohmer2003grassmannian}. We consider the richer setting where concepts are organized along low-dimensional manifolds that are additively superposed and ask whether and how SAEs recover these geometric objects.
    \vspace{-3mm}}}
    \label{fig:intro}
\end{wrapfigure}

Motivated by unprecedented improvements in Large Language Models' (LLMs) capabilities, recent work has sought to understand why an LLM produces a particular output for a given input~\citep{sharkey2025open}.
Often, such work makes assumptions about the geometry of neural network representations, arguing especially that abstract concepts (latent factors) underlying the data-generating process are represented in a ``linear'' fashion~\citep{elhage2022superposition, olahreps, arora2018linear, jiang2024origins}.
Called the \textit{Linear Representation Hypothesis (LRH)}~\citep{park2023linear, zheng2025model}, this geometric model argues a neural network's representations are an additive mixture of several directions, each encoding a specific concept~\citep{elhage2022superposition};
any concept's value can be read-out from a neural network's hidden representations via a linear map~\citep{belinkov2022probing}; 
and linear algebraic operations suffice to manipulate this value~\citep{mikolov2013efficient, korchinski2025emergence, karkada2025closed}.
LRH can thus be deemed as a generative model of neural network representations, the inverse of which leads to Sparse Autoencoders (SAEs)~\citep{costa2025flat}---a popular tool used for unsupervised discovery of concepts learned by a model~\citep{bricken2023monosemanticity, gao2024scaling, bussmann2024batchtopk, rajamanoharan2024jumping, bussmann2025learning}, with deep roots in the older literature on sparse coding~\citep{olshausen1996emergence,olshausen1997sparse,klindt2020towards,klindt2025superposition}, sparse subspace clustering~\citep{elhamifar2013sparse,abdolali2021beyond}, and nonlinear manifold learning~\citep{tenenbaum2000global,roweis2000nonlinear}.

\begin{figure}
   \centering
   \vspace{-10pt}
   \includegraphics[width=0.9\linewidth]{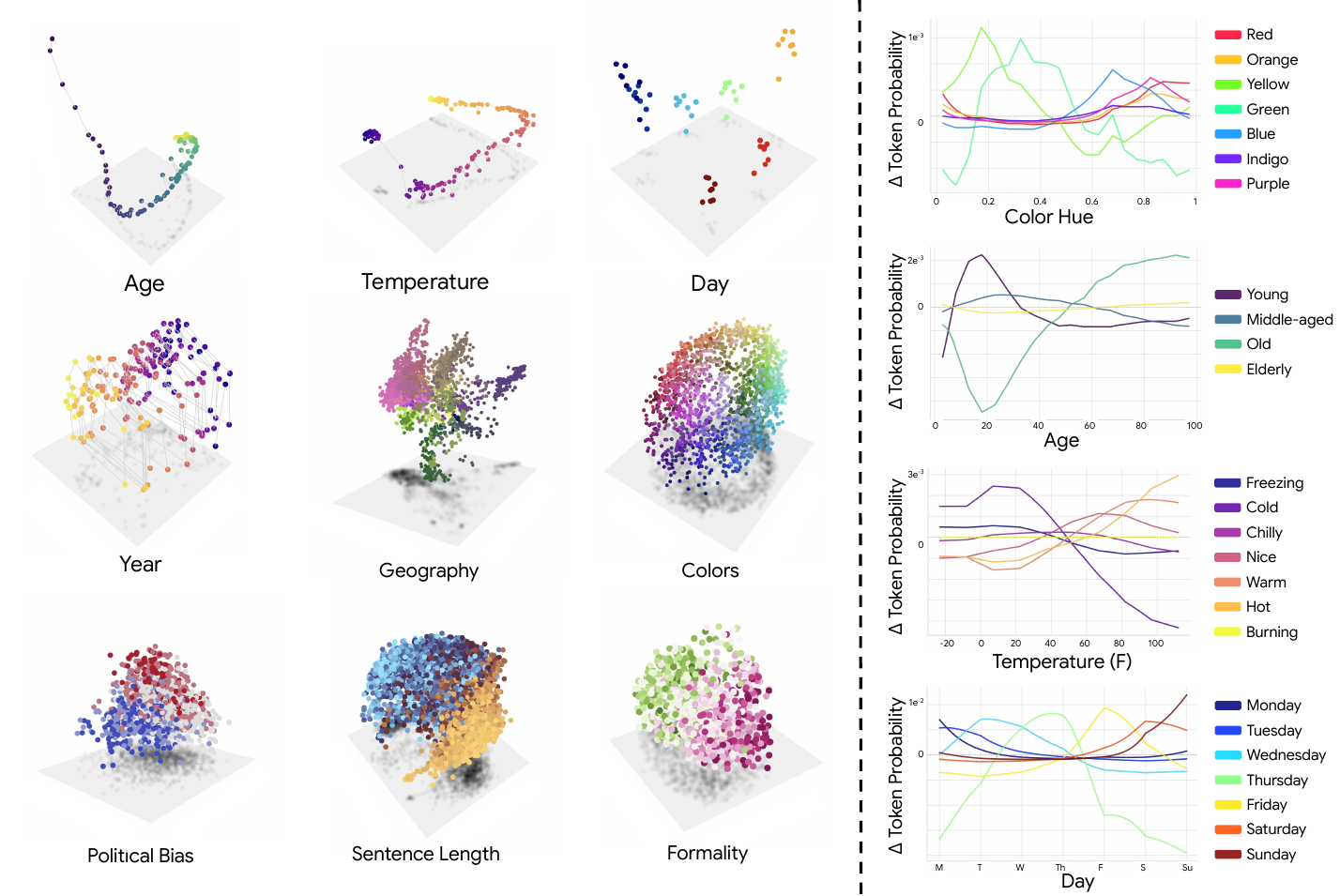}
   \caption{\small{\textbf{Evidence of manifold structure in model representations and its effect on behavior.} (Left) PCA projections of Llama3.1-8B layer 19 activations corresponding to continuous concepts (e.g., age, temperature, day, color) reveal smooth geometric structure rather than isolated directions.
   (Right) Steering interventions between concept centroids (e.g., Wednesday to Thursday) produce smooth changes in token probabilities for concept-dependent outputs.} 
   \vspace{-15pt}
    }
   \label{fig:manis}
\end{figure}

While SAEs have been used at scale to some success, e.g., to debug neural networks deployed at scale~\citep{OAISAE, nguyen2025deploying} or to identify candidate biomarkers learned by an epigenetics model~\citep{wang2026alzheimers}, a growing body of recent work has argued that geometry of neural network representations is more intricate than LRH suggests~\citep{lubana2025priors,fel2025into,karkada2026symmetry,dooms2025finding}: e.g., periodic concepts are encoded along a circular topology~\citep{modell2025, kantamneni2025language, engels2024not}; open-ended numerical concepts along a linear topology~\citep{gurnee2025when, yocum2025neural}; in-context statistics induce arbitrary graph structured representations~\citep{park2025iclrincontextlearningrepresentations, saanum2025circuit, sarfati2026shape}; hierarchical representations are seen in genomics~\citep{pearce2025tree} and vision-language models~\citep{costa2025flat}; syntactic relations are organized along a polar coordinate system that jointly encodes the existence and the type of dependencies~\citep{diego2024polar}; and spatially and temporally smooth representations emerge in vision and language models, respectively~\citep{chung2018classification, cohen2020separability,  lubana2025priors, hosseini2026context,dhimoila2026cross,fel2025archetypal,gorton2024missing}.
\cite{karkada2026symmetry}'s results in fact show that these representation geometries reflect uncertainty across different values a concept can take, hence endowing meaning to distances between two points in the representation space---a property directly at odds with LRH, which primarily focuses on directions.
These results then motivate the question: if representations of a concept exhibit structure outside the scope of LRH, \textbf{do SAEs capture such manifolds\footnote{A note on the word: ``manifold'' is partly convention~\citep{chung2018classification, cohen2020separability, pearce2025tree, modell2025} and partly hope. Empirically, we mean curved, low-dimensional structures that representations appear to lie on; we adopt the strict differential-geometric definition as a working assumption. Whether real representations satisfy that assumption, and whether ``manifold'' survives as the right name, remains an open problem.}?}
In particular, the mismatch between the assumptions made by SAEs and the underlying geometry of model activations does not by itself reject SAEs as valuable interpretations of model representations. If SAEs perform reconstruction well, then their activations must necessarily preserve the geometry of model representations. 
The key issue is therefore not whether the geometry is preserved, but whether it is organized in a useful and interpretable way.
To address this question, we make the following contributions. 

\begin{itemize}[leftmargin=*]
\item \textbf{Formalizing the Problem of Capturing Manifolds using SAEs.} We first demonstrate that a plethora of manifolds, i.e., nonlinearly curved geometric structures with causal efficacy, exist in representations of a pretrained LLM (Sec.~\ref{sec:qualitative_proof}).
Inspired by prior work in neuroscience~\citep{khona2022attractor, eichenbaum2018barlow}, we formalize the problem of capturing such manifolds via sparse coding and show that if features (rows of an SAE decoder) specialize to specific values of a concept, such that different features cover different values, then the SAE can still satisfy its architectural constraints (e.g., sparsity), achieve good reconstruction, and yet capture the curved geometries underlying neural network representations by ``tiling'' the manifold with its features (Sec.~\ref{sec:formalization}). 
Interestingly, these results also yield an impossibility claim for current SAE architectures directly motivated by manifold learning algorithms.

\item \textbf{Exhaustively Characterizing How SAEs Tile Manifolds.} Moving beyond the theoretical possibility of SAEs tiling manifolds, we perform a thorough characterization to demonstrate this mechanism manifests in both natural settings and synthetic datasets.  
We showcase ``tuning curves''~\citep{butts2006tuning}, highlighting the selectivity of SAE features for specific values of a concept: splitting into finer-than-necessary grained buckets \citep{lubana2025priors, bricken2023monosemanticity, chanin2024absorption}, while other parts are represented in a redundant manner across several features.
This suggests that low reconstruction error alone does not guarantee coherent manifold recovery. 
In practice, SAEs often represent manifolds through fragmented collections of atoms that behave like localized detectors, rather than a coherent global structure.

\item \textbf{Unsupervised Discovery of Manifold Structures.} 
Toward a predictive account, we define an optimization problem motivated by the classical Ising model in Physics~\citep{schneidman2006weak} to identify features whose co-activation statistics are either strongly correlated or anti-correlated.
This unsupervised method helps identify both manifolds tiled by SAE features that we could find via supervised data (Fig.~\ref{fig:manis}) and novel ones.
Critically, our results show that mere correlation of feature directions, as used in prior work~\citep{engels2024not}, need not suffice to find manifolds.
\end{itemize}

More broadly, the perspective put forward in this paper suggests structured, nonlinear geometries are ubiquitous in model representations and are likely to be the unit of computation in which frameworks of interpretability should be defined.
To this end, we either need protocols that actively seek to isolate manifolds from a model's representations or perform posthoc analysis of SAE features to identify such geometries.
Despite positive results, we note mixed selectivity features make the latter an unreliable option, but until novel featurizers are developed, are our current best option.

\vspace{-15pt}

\begin{goodfirenote}{A Geometric Note on Prior Negative Results for SAEs.}
This perspective has direct practical value in that it explains a number of limitations and phenomena previously documented for SAEs. 

~~\i{i}~\textbf{Instability of learned dictionaries.} The apparent instability of learned features across runs can arise naturally when concepts are organized along manifolds: multiple equivalent bases or tilings can represent the same underlying geometry, so different initializations may yield different but equally valid decompositions.

~~\i{ii}~\textbf{Steering through individual features is often ineffective or brittle.} If features correspond to local directions or patches of a curved manifold, then intervening on a single feature pushes representations off the manifold rather than moving coherently along it

~~\i{iii}~\textbf{Difficulty of automated interpretability.} Only taken together do features span a subspace whose structure makes sense of the underlying object; inspecting directions in isolation will fail to recover any meaning, or worse, hallucinate a spurious one.
\\

Together, these observations suggest that many negative findings about SAEs reflect a mismatch between direction-based interpretations and the nonlinear geometry of representations, rather than a failure of the models themselves.

\end{goodfirenote}

\section{Notations: Sparse Coding and SAEs}
Throughout, vectors are denoted by lowercase bold letters (e.g.,
$\bm{x}$) and matrices by uppercase bold letters (e.g., $\bm{X}$).
We use $[n]$ for the set $\{1, \dots, n\}$.
We write
$\mathcal{B}^{c \times d} = \{\bm{M} \in \mathbb{R}^{c \times d}
\mid \|\bm{M}_{i,:}\|_2 = 1,\; \forall i\}$ for the set of
matrices with unit-norm rows.
For a matrix $\bm{V} \in \mathbb{R}^{k \times d}$, we write
$\mathrm{Im}(\bm{V}) = \{\bm{x}\bm{V} : \bm{x} \in \mathbb{R}^k\}
\subseteq \mathbb{R}^d$ for its row span and $\bm{X} \geq \bm{0}$ (or $\bm{x} \geq \bm{0}$) indicates element-wise non-negativity.
It is well-established that current approaches for concept recovery from neural network representations are fundamentally instances of sparse coding~\citep{fel2023holistic,fel2025archetypal,hindupur2025projecting}. 
Briefly, sparse coding assumes a generative model where data points are produced by a sparse linear combination of latent variables (the concepts)~\citep{olshausen1996emergence, olshausen1997sparse}. 
Given an input $\bm{x}$, the goal is to extract its underlying generative factors using an overcomplete dictionary.  

\begin{definition}[Sparse Autoencoders]
\label{def:concept_duality}
Given an activation $\bm{x} \in \mathcal{A}$, SAEs extract a latent representation $\bm{z} \in \mathbb{R}^c$ via a dictionary $\bm{D} \in \mathbb{R}^{c \times d}$ by solving the following optimization:
%
\begin{equation}
\small
\label{eq:dictionarylearning}
\argmin_{\substack{\bm{W},~ \bm{D}\in\Omega}} \|\bm{x} - \bm{z}\bm{D}\|_2^2 
+ \lambda \mathcal{R}(\bm{z}) 
\quad \text{with} \quad \bm{z} = \operatorname{ReLU}( \bm{x}\bm{W}), \quad \Omega = \mathcal{B}^{c \times d}
\end{equation}
where $\mathcal{R}(\bm{z})$ is a sparsity-promoting regularizer (e.g., restricting $\|\bm{z}\|_0 \leq k$). Consequently, the localized reconstructions $\hat{\bm{x}} = \bm{z}\bm{D}$ lie in a sparse non-negative span (a cone).
\end{definition}

\section{Manifolds are Ubiquitous in LLM Representations}
\label{sec:qualitative_proof}

Before we proceed further with a detailed study of how SAEs capture curved geometries, i.e., manifolds, we first show these objects are a construct worth studying.
To this end, we build on recent results showing language model representations reflect symmetries in data statistics, resulting in curved representation geometries~\citep{karkada2026symmetry}.
Many real-world concepts in fact exhibit such inherent continuity and structure, taking values that smoothly vary along some range (e.g., temperature varies along the real line).
Such concepts can thus be expected to be represented along low-dimensional geometric objects embedded in a high-dimensional space. 
Building on this, we take several domains where a concept can be continuously varied, define a template in which a variable takes on values from this concept (see App.~\ref{app:manifold_details} for details), and sample several strings that vary primarily along this concept's value.
Performing a PCA of these representations then results in curved, often nonlinear geometries shown in Fig.~\ref{fig:manis} (left): this includes concepts characterized in prior work, e.g., days of the week organized in a cycle \citep{engels2024not}, and also new ones, e.g., colors organized along a paraboloid with circular hue and lightness dimensions, and spatial or temporal variables.
In all cases in Fig.~\ref{fig:manis} (left), we see distances and neighborhoods encode semantic similarity, i.e., nearby points correspond to similar meanings.

To emphasize these results further and highlight more strongly the disparity of these manifolds being outside the scope of LRH, we showcase that these manifolds are not merely geometric artifacts but are functionally relevant by measuring their effect on model behavior. 
Specifically, we find we can steer along the manifolds by steering between prototypical centroids (e.g., center of ``Wednesday" tokens) and smoothly interpolating between those points. 
For tasks that depend on the underlying variable—such as predicting color names from hex codes or describing temperature in natural language—we observe that model outputs change smoothly and predictably along these interpolations (Fig.~\ref{fig:manis}, right). 
This indicates that the manifold structure is not only present in the representation but also causally influences downstream behavior.

\section{Formalizing Manifold Capture in Sparse Representations}
\label{sec:formalization}

To concretize what it means to successfully ``capture'' manifolds identified from off-the-shelf pretrained LLMs using SAEs, we first analyze an abstraction that extends LRH to concepts with multi-dimensional, nonlinearly curved geometries.
We call this model of representations the ``Additive Mixture of Manifolds'' (see Figure \ref{fig:intro}).
We emphasize we are not making a normative claim here that neural networks satisfy this model of representations; instead, our goal is to take a concrete scenario where we can be rigorous, define useful metrics, and then see how results generalize to off-the-shelf models.

\paragraph{Representations as Additive Mixture of Manifolds.}
The Linear Representation Hypothesis (LRH) models each concept as a ray in activation space: a single direction scaled by a coefficient~\citep{park2023linear,costa2025flat}. 
The geometric structure identified in Sec.~\ref{sec:qualitative_proof} suggests that this is a special case of a richer phenomenon in which concepts vary continuously over low-dimensional surfaces, the formal description of which can be attributed to several prior works~\citep{modell2025,fel2025into,costa2025flat,lubana2025priors}.

\begin{definition}[Additive Mixture of Manifolds]
\label{def:mrh}
Let $\mathcal{M}_1, \ldots, \mathcal{M}_m \subset \mathbb{R}^d$ be
compact smooth submanifolds with ambient dimensionality 
$\dim(\mathcal{M}_i) = d_i \ll d$. Let $\bm{f}_{i}:\mathcal{M}_i\rightarrow \mathbb{R}^{d}$ be the immersion maps from each submanifold into $\mathbb{R}^d$.
Additive mixture of manifolds defines a model of representations wherein representations decompose into a superposition of manifolds as follows.
\begin{equation}
\small
\label{eq:superposition}
\bm{x} = \sum_{i \in S \subseteq [m]} \bm{f}_i(\bm{m}_i),
\qquad \bm{m}_i \in \mathcal{M}_i, \quad |S| \ll m.
\vspace{-5pt}
\end{equation}
\end{definition}

In other words, $\bm{x}$ lives in a Minkowski sum of the immersed submanifolds $\mathcal{M}_i$. 
When each $\mathcal{M}_i$ is a ray ($d_i = 1$), every term
$\bm{f}_i(\bm{m}_i)$ is a scalar multiple of a fixed direction, recovering the LRH.
In the general case, each manifold is contained in a
$k_i$-dimensional affine subspace and admits a parametrization
$\bm{m}_i(\bm{\theta}) =
\bm{\gamma}_i(\bm{\theta})\,\bm{V}_i + \bm{b}_i$. Here,
$\bm{\theta} \in \Theta_i \subseteq \mathbb{R}^{d_i}$ represents the
intrinsic coordinates, and the map
$\bm{\gamma}_i$ is a smooth embedding.
Furthermore, $\bm{V}_i \in \mathbb{R}^{k_i \times d}$ is an orthonormal
basis matrix, and $\bm{b}_i \in \mathrm{Im}(\bm{V}_i)$ is a translation
vector.
Importantly, superposition arises when $\sum_i k_i > d$.
Now that we have formally defined the target object of our interest, we are ready to examine what it mathematically means to capture a manifold.

\paragraph{Subspace Recovery via SAEs}

\begin{wrapfigure}[19]{r}{0.47\textwidth}
    \centering
    \vspace{-15pt}
    \includegraphics[width=\linewidth]{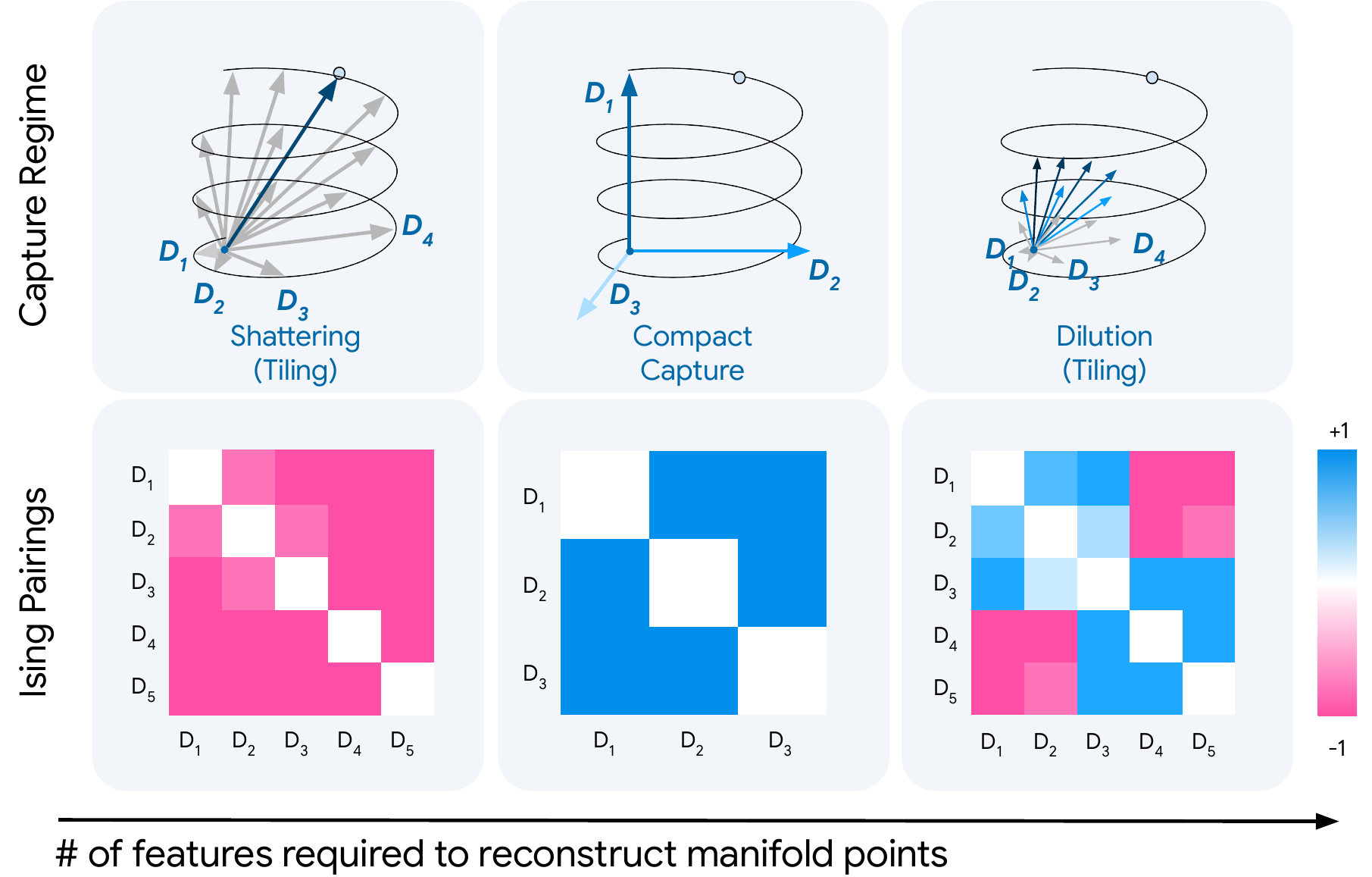}
    \caption{\small{\textbf{Tiling vs.\ Capture.} When features are highly selective, manifolds are ``tiled'' by shattering into sub-parts and features show anti-correlated occurrences (left). Compact capture involves features jointly reconstructing the manifold with no selectivity, resulting in positive couplings for the full support (middle). Dilution occurs when many redundant atoms activate to tile the manifold, but with feature sets of mixed selectivity (right).}
    }
    \label{fig:ising_hypo}
\end{wrapfigure}

It is easy to see that for an SAE to reconstruct a representation $\bm{x}$, $\bm{x}$ ought to lie in the linear span of its decoder.
The central observation we posit in this section is that an SAE captures a manifold well when a small, fixed group of atoms spans a subspace containing it, and the encoder consistently selects this group on every input from the manifold.

\begin{definition}[Subspace capture]\label{def:subspace_capture}
An SAE captures $\mathcal{M}$ at precision $\varepsilon$ if there exists 
$S^\star \subset [c]$ with $|S^\star| \leq k_{\mathcal{M}}$ such that
\begin{equation}
    \bigl\| \bm{x}_m - \sum_{i \in S^\star} z_i(\bm{x}_m)\, \bm{D}_i \bigr\| \;\leq\; \varepsilon
    \quad \forall\, \bm{x}_m \in \mathcal{M}.
\end{equation}
\end{definition}

Intuitively speaking, the definition says that a few decoder directions serve as a low-dimensional detector for $\M$. 
This is parsimonious (few atoms for the whole manifold) and coherent (the same atoms fire for every input on $\M$), and under an additional assumption
it is possible to establish a condition for the capture of a manifold in the sense of Defn.~\ref{def:subspace_capture}.
We note that this formulation is closely related to the subspace-preserving recovery condition that grounds sparse subspace clustering~\citep{elhamifar2013sparse, soltanolkotabi2014robust,tschannen2018noisy}; we provide a detailed treatment of this connection in App.~\ref{app:scc}.

\begin{theorem}[Subspace recovery]\label{thm:subspace_recovery}
Let $\M$ lie in a $k$-dimensional affine subspace with orthonormal basis 
$\bm{V}$. Let $\bm{D}$ be $\mu$-incoherent, and suppose there exists 
$\substar \subset [c]$ with $|\substar| = k$ such that 
$\mathrm{Im}(\bm{V}) = \mathrm{span}(\bm{D}_{\substar})$ and $\mu < 1/(2k-1)$. 
If the SAE achieves reconstruction error $\varepsilon(\M) \leq \lambda$, then 
an idealized sparse decoder over $\bm{D}$ captures $\M$ at precision $O(\lambda)$.
\end{theorem}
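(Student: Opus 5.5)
The plan is to reduce the claim to classical sparse-recovery guarantees under mutual incoherence, interpreting the ``idealized sparse decoder'' as the $\ell_0$-constrained (or $\ell_1$-penalized, Lasso-type) solution $\bm{z}^\star(\bm{x}) \in \argmin_{\bm{z}} \|\bm{x}-\bm{z}\bm{D}\|_2^2$ subject to $\|\bm{z}\|_0 \le k$ (respectively with penalty $\lambda\|\bm{z}\|_1$). Every point $\bm{x}_m\in\M$ lies in $\mathrm{Im}(\bm{V}) = \mathrm{span}(\bm{D}_{\substar})$, since $\bm{b}\in\mathrm{Im}(\bm{V})$ makes the affine subspace linear. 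Hence each $\bm{x}_m$ admits an exact representation $\bm{x}_m = \bm{a}(\bm{x}_m)\bm{D}_{\substar}$ with support in $\substar$ and $|\substar| = k$. The goal is therefore to show that the decoder's code is supported in $\substar$ up to an $O(\lambda)$ perturbation, and that the residual restricted to $\substar$ is $O(\lambda)$. Together these give the bound of Definition~\ref{def:subspace_capture} with $S^\star=\substar$.

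\textbf{Step 1: restricted conditioning.} Using Gershgorin and $\mu$-incoherence, any $2k$ atoms have Gram matrix eigenvalues in $[1-(2k-1)\mu,\,1+(2k-1)\mu]$. The hypothesis $\mu<1/(2k-1)$ makes the lower bound positive, so $\bm{D}$ satisfies an RIP-type bound of order $2k$ with constant $\delta_{2k}\le(2k-1)\mu<1$. This is exactly the condition for uniqueness of $k$-sparse representations, the $\mathrm{spark}(\bm{D})>2k$ argument of Donoho--Elad. It is also the condition under which OMP and basis pursuit recover the support (Tropp's exact recovery condition). In particular, in the noiseless case any $k$-sparse code reproducing $\bm{x}_m$ must equal $\bm{a}(\bm{x}_m)$ and be supported on $\substar$.

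\textbf{Step 2: stability with error $\lambda$.} The SAE's actual code $\bm{z}(\bm{x}_m)$ is $k$-sparse with $\|\bm{x}_m-\bm{z}\bm{D}\|\le\varepsilon(\M)\le\lambda$. I compare it to $\bm{a}(\bm{x}_m)$. The difference $\bm{z}-\bm{a}$ is $2k$-sparse, and $\|(\bm{z}-\bm{a})\bm{D}\|\le\lambda$. The lower restricted eigenvalue from Step~1 then gives
\[
\|\bm{z}-\bm{a}\|_2\le\lambda/\sqrt{1-(2k-1)\mu}.
\]
Define the split $\bm{z}=\bm{z}_{\substar}+\bm{z}_{\substar^c}$. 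Then $\|\bm{z}_{\substar^c}\|_2\le\|\bm{z}-\bm{a}\|_2=O(\lambda)$. By the triangle inequality, and since atoms have unit norm with $\|\bm{z}_{\substar^c}\bm{D}\|\le\sqrt{1+(2k-1)\mu}\,\|\bm{z}_{\substar^c}\|$,
\[
\Bigl\|\bm{x}_m-\sum_{i\in\substar}z_i\bm{D}_i\Bigr\|\le\lambda+\|\bm{z}_{\substar^c}\bm{D}\|=O(\lambda).
\]
This holds uniformly over $\bm{x}_m\in\M$, with a constant depending only on $k\mu$.

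\textbf{Main obstacle.} The hard part is the fidelity of the modeling assumption rather than the algebra. The ReLU encoder need not coincide with the idealized decoder, and its codes may not be $k$-sparse. I would handle this by stating the result for the idealized decoder, as the theorem does, and by noting that it applies to the SAE's code whenever that code has at most $k$ nonzeros. For the $\ell_1$ version, I would instead invoke Tropp's or Fuchs' Lasso support-recovery result. Under $\mu<1/(2k-1)$ the exact recovery coefficient is below $1$, so with penalty $\lambda$ the Lasso solution is supported inside $\substar$ and its error is $O(\lambda)$. The residual outside $\substar$ then vanishes, and the bound follows directly. I would also check that nonnegativity does not break uniqueness, which holds because it only shrinks the feasible set.
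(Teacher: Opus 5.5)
Your proof is correct, but it rests on a different key lemma than the paper's.

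\textbf{The paper's route.} It sets $\bm{\delta}=\bm{z}-\bm{c}^\star$ and invokes the coherence ERC together with Tropp's noise-robust null-space property. This gives an $\ell_1$ bound $\|\bm{z}_{\bar S}\|_1=O(\lambda)$ on the off-$\substar$ coefficients. It then closes with unit-norm atoms and the triangle inequality, $\|\bm{D}_{\bar S}\bm{z}_{\bar S}\|\le\|\bm{z}_{\bar S}\|_1$.

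\textbf{Your route.} You get a lower restricted eigenvalue $1-(2k-1)\mu$ for $2k$-sparse vectors from Gershgorin. You apply it to the $2k$-sparse difference $\bm{z}-\bm{a}$, which yields an $\ell_2$ bound on $\bm{z}_{\substar^c}$ with an explicit constant.

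\textbf{What each buys.} Your argument is elementary and self-contained, and it states openly the hypothesis $\|\bm{z}\|_0\le k$ that the argument actually needs. The paper's inequality $\|\bm{\delta}_{\bar S}\|_1\le C\|\bm{D}\bm{\delta}\|_2$ cannot hold for unrestricted codes over an overcomplete $\bm{D}$: perturb $\bm{c}^\star$ by a null-space vector of $\bm{D}$. So the paper tacitly relies on the same sparsity, or on $\ell_1$-optimality of the code, as with the OMP/BP decoder its appendix has in mind. The ERC/null-space phrasing does connect directly to those pursuit algorithms and to $\ell_1$-type decoders whose outputs are not a priori $k$-sparse. Your Lasso remark via Tropp's support-recovery theorem covers that case in the same spirit.

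\textbf{Two smaller advantages of your version.} First, handling the offset through $\bm{b}\in\mathrm{Im}(\bm{V})$ avoids the paper's appeal to an encoder bias. Second, Step 2 applies to any $k$-sparse code with error at most $\lambda$. It therefore covers both the SAE's own code and the $\ell_0$-optimal nonnegative idealized decoder, whose error is then also at most $\lambda$ whenever the SAE code is $k$-sparse. This resolves the statement's ambiguity between ``the SAE'' and ``an idealized sparse decoder.''
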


The proof relies on classical results in sparse dictionary learning~\citep{donoho2003optimally, tropp2004greedy, tropp2006just}; see App.~\ref{app:theory:directions} for details, including a discussion of the amortization gap between the idealized decoder and the trained encoder.
Essentially, when (i) the reconstruction error is low enough, (ii) the sparsity regime is aligned with the ambient dimension of $\M$, and (iii) the dictionary is incoherent enough, we can ensure proper manifold recovery in the subspace sense.
The coefficients $(z_i)_{i \in \substar}$ then vary continuously as $\bm{x}_m$ moves along $\M$, tracing out the manifold in the SAE's coordinate system.

\paragraph{From Capture to Tiling.} The result above highlights an ideal scenario, i.e., the features align with the ambient space of the manifold.
However, when the number of atoms allocated to a manifold exceeds its ambient dimension $k_i$, the SAE is no longer constrained to reuse a fixed group and may assign different atoms to different regions of $\mathcal{M}_i$.
Each atom then acts as a localized detector with a receptive field on the manifold: this mechanism is essentially the one popularly studied in neuroscience, wherein neurons are argued to be sensitive to different values of a concept, covering overall geometry via the population code~\citep{khona2022attractor, eichenbaum2018barlow}.
In line with this literature, we call this phenomenon \textbf{tiling}: localized features with overlapping support whose joint activity encodes position along the manifold.
As we show in Fig.~\ref{fig:ising_hypo}, tiling manifests in two qualitatively different forms: \textbf{shattering}, where active sets $\{\supp(\z(\x_m))\}_{\bm{x}_m \in \M}$ across $\M$ are nearly disjoint and atoms partition the manifold, and \textbf{dilution}, where active sets overlap substantially but no compact group of size $\leq k_i$ accounts for $\M$. We give operational definitions of both regimes in Appendix.~\ref{app:ising}.

\begin{figure}[t]
    \centering
    \includegraphics[width=\linewidth]{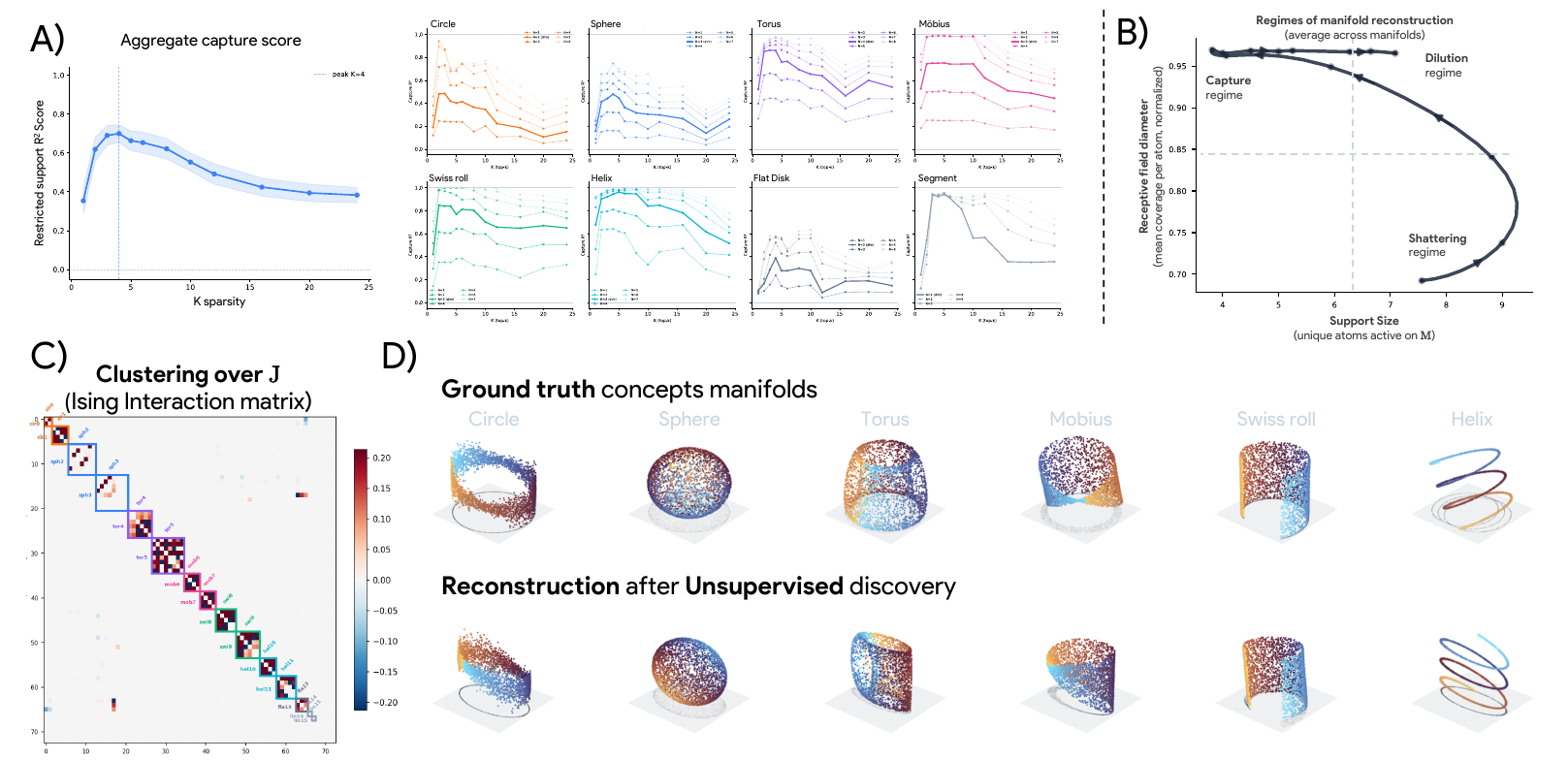}
    \caption{\small{\textbf{Synthetic validation of manifold capture.} We construct a controlled benchmark where observations are sparse mixtures of known manifolds embedded in $\mathbb{R}^{128}$ (dictionary width $c{=}512$, sparsity $k{=}4$) and make three observations.
    \textbf{A)} Subspace capture has a sparsity sweet spot. Restricted $R^2$ measures whether $k_i$ atoms suffice to reconstruct each manifold from the superposed codes. Capture peaks near $k = 4$ and degrades at both lower and higher sparsity. Per-manifold breakdowns (right) sweep the number of restricted atoms around each manifold's embedding dimension $k_i$.
    \textbf{B)} Increasing sparsity drives the SAE through three regimes. At low $k$, atoms are broadly shared and the manifold is shattered across unrelated groups. At intermediate $k$, a compact set of atoms spans each manifold's subspace (capture). At high $k$, many redundant atoms fire per point and individual atoms lose specificity (dilution). The phase diagram tracks this transition via support size and receptive field spread, averaged across all manifold types.
    \textbf{C--D)} Even outside the capture regime, manifold structure can be recovered post hoc. Fitting a pairwise Ising model on binarized codes yields a coupling matrix $\bm{J}$ whose block-diagonal structure aligns with the ground-truth manifold partition (\textbf{C}). Decoding through the recovered atom groups faithfully reconstructs the topology and geometry of all manifold types without supervision (\textbf{D}).}}
    \label{fig:synthetic}
\end{figure}

Regardless of whether the SAE is in the capture or tiling regime, the group of decoder atoms associated with a manifold is unknown and must be discovered from the codes alone.
To this end, one must use co-activation statistics: atoms that jointly represent a manifold fire together, or in smooth succession, across inputs on $\mathcal{M}_i$.
Raw co-activation of course confounds two distinct sources of statistical dependence: structural co-activation (atoms that span or tile the same manifold) and correlational co-occurrence (concepts that tend to appear together in the data).
It is also dominated by atoms that fire universally, which co-activate with everything without carrying manifold-specific information.

\subsection{Ising Pairings and Regimes of Manifold Representation}

To disentangle structural co-activation from spurious correlations, we model the joint activation statistics of SAE features using a pairwise Ising model over binarized codes~\citep{ising1925beitrag}. 
Let $s_i = 2*\mathbf{1}[z_i > 0] - 1$ denote whether atom $i$ is active. We define
\begin{equation}
p(s)\propto \exp\Big(\sum_{i<j} J_{ij}s_is_j + \sum_i h_i s_i\Big),
\end{equation}
where the fields $h_i$ absorb marginal firing rates and the couplings $J_{ij}$ capture \emph{direct} interactions between atoms after conditioning on the rest of the population.

This formulation isolates the dependencies that arise from atoms jointly representing a manifold. Atoms that fire frequently across all inputs are explained by large $h_i$ but exhibit weak couplings, while indirect correlations induced by superposition are factored out by construction. As a result, $J$ provides a more faithful representation of the functional relationships between features than raw co-activation or decoder similarity.

Importantly, the \emph{sign} and structure of the couplings reflect how a manifold is represented by the SAE (Figure \ref{fig:ising_hypo}. In the \textbf{capture} regime, a fixed set of atoms spans the manifold and co-activate consistently across inputs, yielding predominantly positive couplings within the group. In the \textbf{tiling} (shattering) regime, atoms specialize to distinct regions of the manifold and rarely activate together, leading to strong negative couplings that encode mutual exclusion. In the intermediate \textbf{dilution} regime, redundant and overlapping atoms produce a mixture of positive and negative interactions, resulting in a heterogeneous coupling structure.

These regimes therefore induce distinct signatures in the interaction matrix $J$. Rather than identifying manifolds through geometric similarity of decoder directions, we can instead recover them as \emph{communities of atoms with strong pairwise interactions}, irrespective of whether those interactions are cooperative or inhibitory. This perspective reframes manifold discovery as a problem of uncovering structured dependencies in feature activations, which we operationalize in Sec.~5.

\subsection{A Toy Model of Manifold Superposition}
\label{sec:toy_model}

To validate the framework of Sec.~\ref{sec:formalization} in a controlled setting, we construct a synthetic benchmark where the ground-truth manifolds, their ambient subspaces, and the sparse mixing process are all known by construction (Fig.~\ref{fig:synthetic}).
%
Specifically, we define eight manifold types spanning a range of topologies and intrinsic dimensions: circles, spheres, tori, M\"obius strips, Swiss rolls, helices, flat disks, and line segments. Each instance is embedded into $\mathbb{R}^d$ via a random orthonormal matrix $\bm{V}_i \in \mathbb{R}^{k_i \times d}$ and isotropically rescaled to unit RMS norm, preserving all geometric relationships. We instantiate six parameter variants per type (48 instances total), generate observations $\bm{x} = \sum_{i \in S} \bm{z}_i \bm{V}_i + \bm{\epsilon}$ following Defn.~\ref{def:mrh}, and train TopK SAEs across a range of sparsity budgets; see App.~\ref{app:synthetic} for details.

\paragraph{Results.}
Three findings emerge from this controlled setting (Fig.~\ref{fig:synthetic}). 
(\textbf{\textit{i}})~Subspace capture has a sparsity sweet-spot.
(\textbf{\textit{ii}})~Increasing sparsity drives the SAE through the three reconstruction regimes hypothesized in Section~\ref{sec:formalization}. 
(\textbf{\textit{iii}})~Even outside the capture regime, manifold structure can be recovered post hoc.

\section{Characterizing Manifold Capture in LLMs}
\label{sec:characterizing}

\begin{figure}[t]
    \centering
    \includegraphics[width=1.0\linewidth]{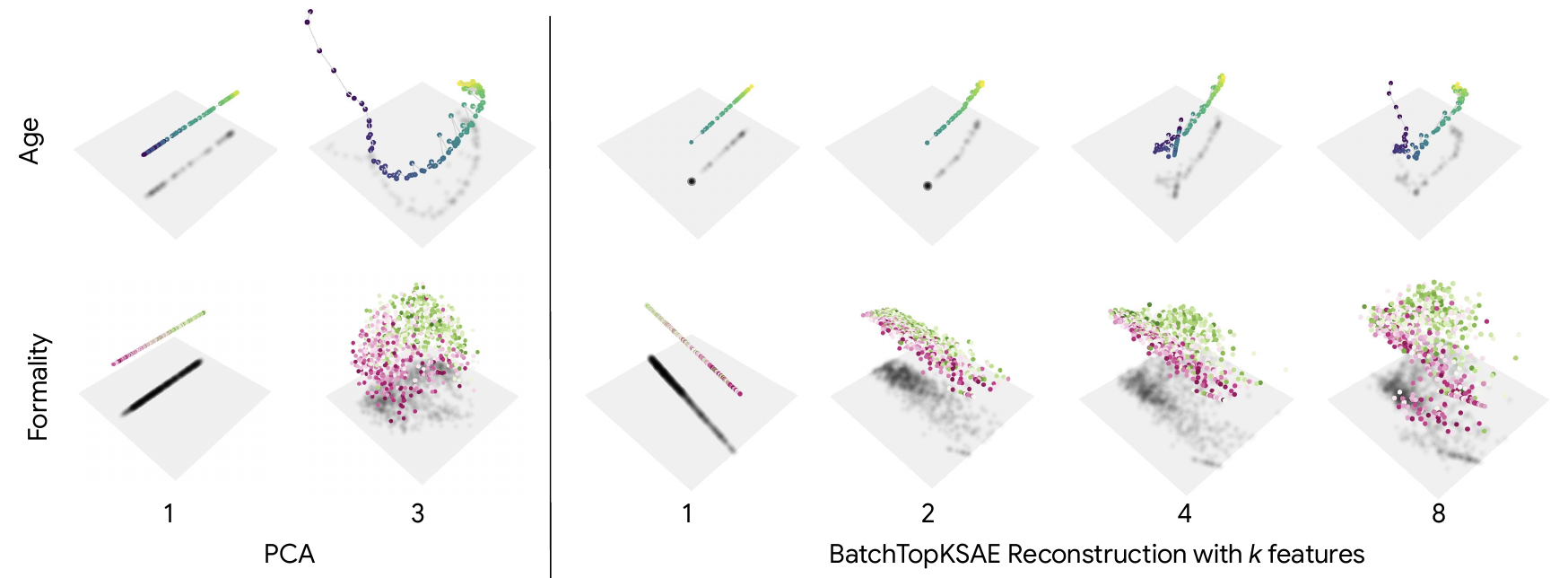}
    \caption{\small{\textbf{Piecewise-linear approximation of manifold geometry.} (Left) PCA projections of Llama3.1-8B activations show that manifolds are well described by a small number of global components. (Right) Reconstructing from increasing numbers of SAE features approximates the manifold in a piecewise-linear fashion: each feature captures a local region, and their union progressively covers the full geometry.
    }}
    \label{fig:feature_recon}
\end{figure}

We now aim to confirm our framework and findings from the synthetic setup in a more realistic setup.
To this end, we use representations from Llama3.1-8B at the residual stream of layer 19. 
We train five SAE architectures: Standard ($\ell_1$), JumpReLU~\citep{rajamanoharan2024jumping}, TopK~\citep{gao2024scaling}, BatchTopK~\citep{bussmann2024batchtopk}, and Matryoshka~\citep{bussmann2025learning}, with expansion factors of 8 and 16 and sparsities of 64, 128, and 256, on 500M tokens of The Pile~\citep{monology2021pile-uncopyrighted}; we only analyze SAEs achieving variance explained above 0.85. 
See App.~\ref{app:manifold_details} for further details.

\textbf{SAEs do not achieve compact capture.}
We apply the same restricted $R^2$ protocol as in Sec.~\ref{sec:toy_model}: for each manifold, we greedily select atoms by residual variance explained and measure reconstruction quality as a function of support size. Fig.~\ref{fig:subspace} shows the result averaged across manifolds and architectures. Variance explained grows with the number of restricted features but plateaus at a support size well beyond each manifold's ambient dimension. This indicates that current SAEs do not allocate a compact atom group whose span contains the manifold. Instead, the geometry is diluted across a larger, partially redundant set of features, placing SAEs in the dilution regime identified in Sec.~\ref{sec:toy_model}.

\begin{wrapfigure}{r}{0.5\textwidth}
    \centering
    \includegraphics[width=0.95\linewidth]{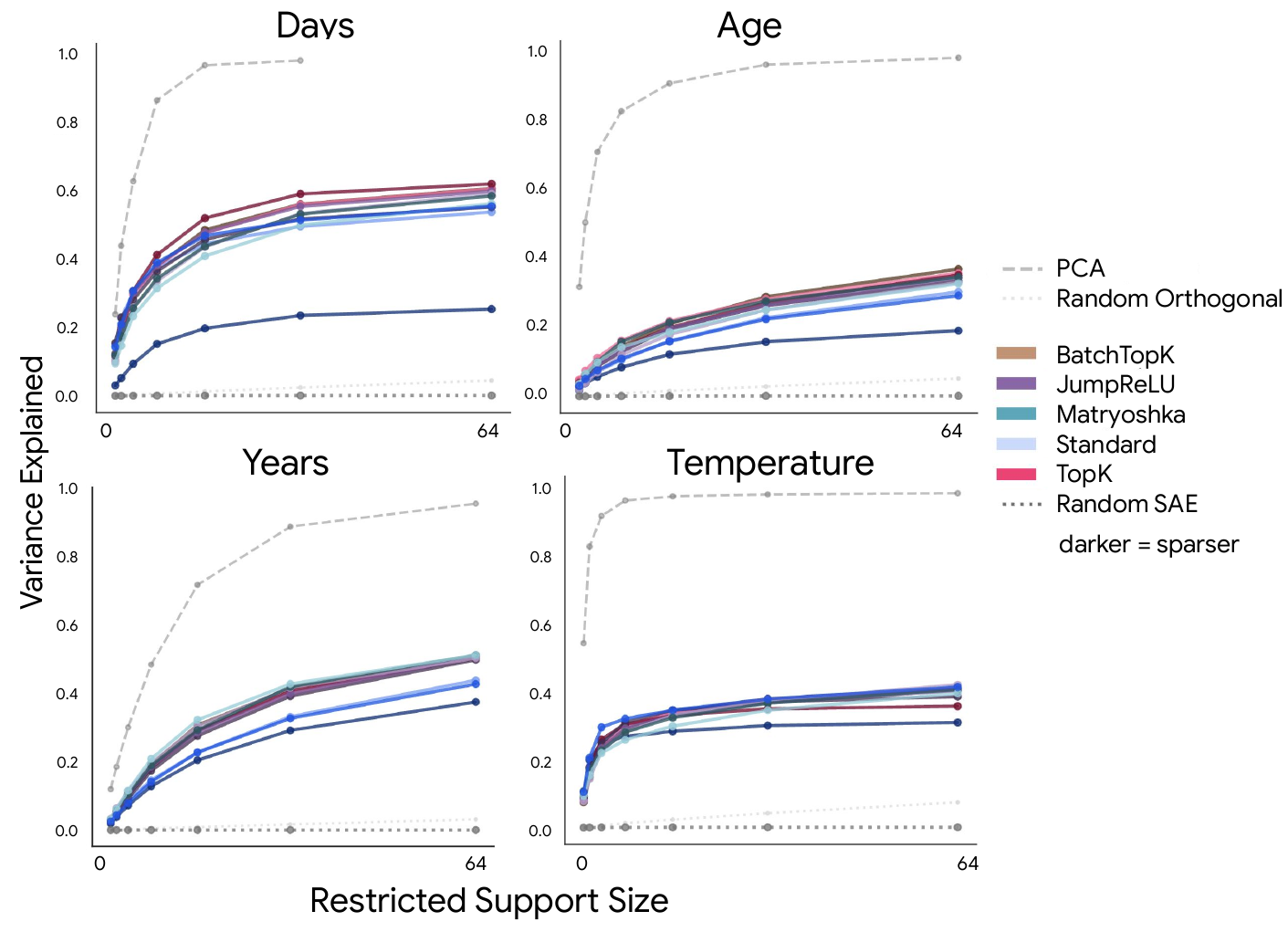}
    \caption{\small{\textbf{Subspace capture on Llama3.1-8B.} Variance explained as a function of the number of restricted features, averaged across manifolds and SAE architectures. Performance increases with support size but plateaus well beyond the manifold's ambient dimension, indicating that current SAEs are in the Dilution regime identified in Sec.~\ref{sec:toy_model}.}}
    \label{fig:subspace}
\end{wrapfigure}

\textbf{Features tile manifolds as localized detectors.}
If SAEs do not capture manifolds compactly, how do they represent them? Fig.~\ref{fig:feature_recon} contrasts PCA projections of the raw activations (which reveal smooth, low-dimensional geometry) with SAE reconstructions using increasing numbers of features. Individual features reconstruct local patches of the manifold in a piecewise-linear fashion, and their union progressively covers the full geometry. 
These results place current SAEs in an intermediate regime between ideal subspace capture---where a small, fixed set of atoms spans the manifold---and shattering---where localized features cover different regions of the geometry. 
While the manifold structure is preserved, it is fragmented across many features, consistent with the dilution behavior from Sec.~\ref{sec:toy_model}.

\begin{figure}
    \centering
    \includegraphics[width=1.0\linewidth]{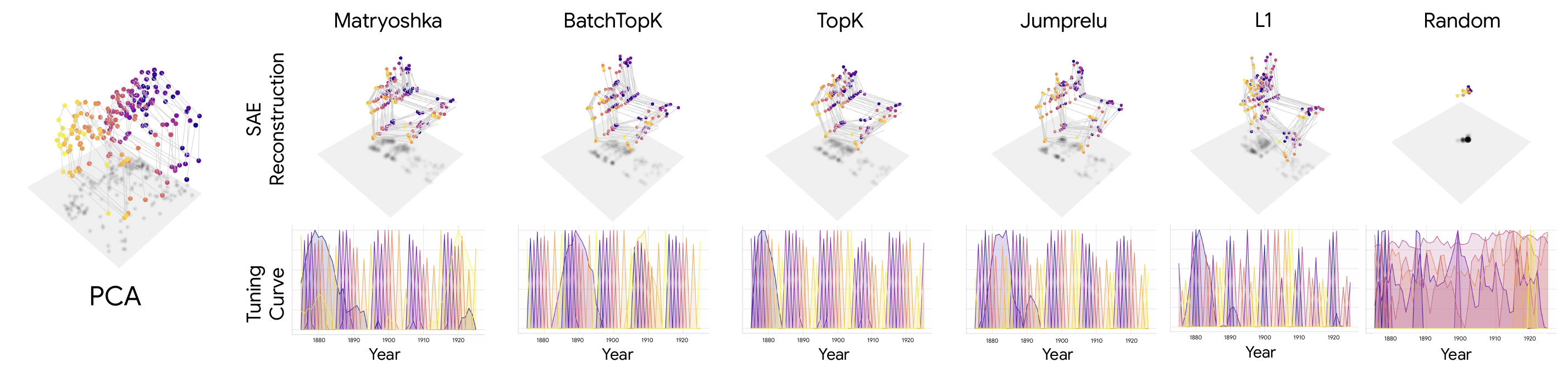}
    \caption{\small{\textbf{SAE features tile manifolds with tuning curves reminiscent of population coding.} Activations of the top features as a function of position along the ``years'' manifold. Each feature exhibits a localized, smooth activation profile covering a restricted region of the manifold, with overlapping support across features. For the years manifold, most SAEs learn features selective to the `ones' digit (activating periodically every 10 years) alongside features encoding the decade. These patterns are reminiscent of neural tuning curves in biological population codes, where no single neuron encodes the full variable but the population's joint activity traces out the underlying geometry.}}
    \label{fig:years_recon}
\end{figure}

\textbf{Tiling selectivity.}
This shattering effect is further made clear when analyzing feature activations as a function of the manifold concept.  In Fig.~\ref{fig:years_recon}, we plot the activations of the top-10 features for each SAE on the \texttt{years} manifold.
We observe that features exhibit localized activation patterns, responses vary smoothly across the manifold, and multiple features cover overlapping regions of the variable. 
In particular, for years, we can see that most SAEs learn individual features that represent the ones digit of the year, activating periodically every 10 years, as well as other features that carry information about the decade. 
These patterns are reminiscent of neural tuning curves~\citep{pouget1999narrow,hubel1968receptive,georgopoulos1986neuronal}, where each feature responds to a restricted region of the manifold.
Figure \ref{fig:days_receptive} visualizes the receptive fields of the top features for each SAE on the days of the week manifold, highlighting the selectivity of features in the ambient space within which the manifold lives.

We further explore the selectivity of features by plotting SAE feature activations in the ambient space (defined by the top 3 principal components) a manifold lives in. We sample points in the ambient space, decompose them with the SAE, and color those points by their max activating features. We weight the size of these points by using the reconstruction error of the SAE to model the ambient space as a probability distribution of possible manifold points. In Figure \ref{fig:days_receptive}, we see that features exhibit selectivity for each day of the week, with the different assumptions made by each SAE resulting (e.g. angular separability vs linear separability) visible in how the ambient space is shattered by the features. 

\begin{figure}
    \centering
    \includegraphics[width=1.0\linewidth]{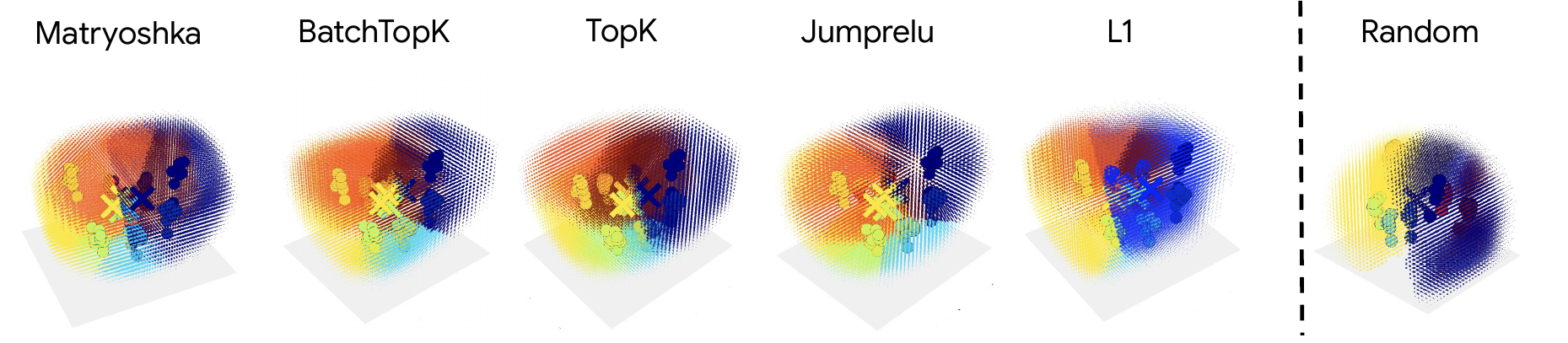}
    \caption{\small{Receptive field plots for different SAE architectures on the days of week manifold. Sampled points in the ambient space of the manifold are colored by their highest activating SAE feature, highlighting feature selectivity in the ambient space as well as the architectural biases of different SAEs (e.g., angular separability in Top-K SAEs and linear separability in $L_1$).}}
    \label{fig:days_receptive}
\end{figure}

Overall, the observations in this section strongly support a tiling model of representation: manifolds are encoded by collections of localized features whose joint activity captures the underlying geometry.
A second critical implication is that individual features will only offer a narrow view of what concept an SAE is trying to capture: only the group of features that tile a manifold as a whole carries geometric meaning. 
Interpretability in this regime thus requires reasoning about subspaces, not about individual dictionary elements.

\vspace{-10pt}
\begin{goodfirebox}{A Mismatch Between Featurizers and Geometric Structure.}
SAEs were designed around sparsity in an almost-orthogonal dictionary, with the linear representation hypothesis as their guiding intuition: features compose by addition and are recovered as directions. 
Recent refinements admit multidimensional features, but the featurizer has not followed: atoms remain one-dimensional, and the loss rewards reconstruction without rewarding any coherent treatment of the geometric objects that underlie it. The structures we identify sit outside this design point---curved rather than flat, equipped with non-trivial topology, and organized into objects no single atom captures.

~~

Post-hoc recovery, as we will show now, is possible but unreliable: mixed-selectivity features muddle the co-activation signal, and groupings extracted from Ising couplings are only as good as the underlying tiling permits. Until featurizers are developed that treat geometric objects as the primitive (rather than directions to be assembled into geometry after the fact) post-hoc analysis remains the best available tool, but should be read as a workaround, not a solution.
\end{goodfirebox}

\begin{figure}[t]
    \centering
    \includegraphics[width=1.0\linewidth]{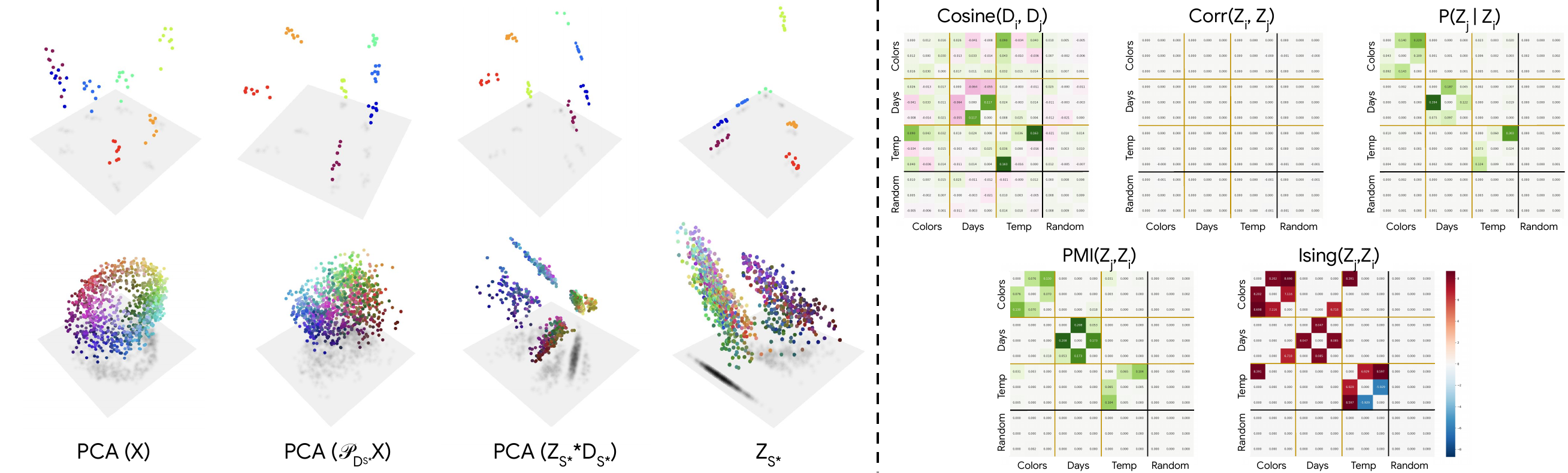}
    \caption{\small{\textbf{Reading manifold geometry from feature groups.} (Left) Four views of the days and colors manifolds using the top 3 supervised features per manifold: PCA of activations (ground truth), PCA of the projection onto the decoder subspace spanned by the group, PCA of partial code reconstructions, and raw feature activations as coordinates. Projecting onto the decoder subspace most faithfully recovers the continuous geometry. (Right) Pairwise feature similarity under five metrics. Ising couplings and conditional co-activation produce the clearest block-diagonal structure aligned with ground-truth manifold assignments.}}
    \label{fig:pca_clustering}
\end{figure}

\begin{figure}[h]
    \centering
    \includegraphics[width=\linewidth]{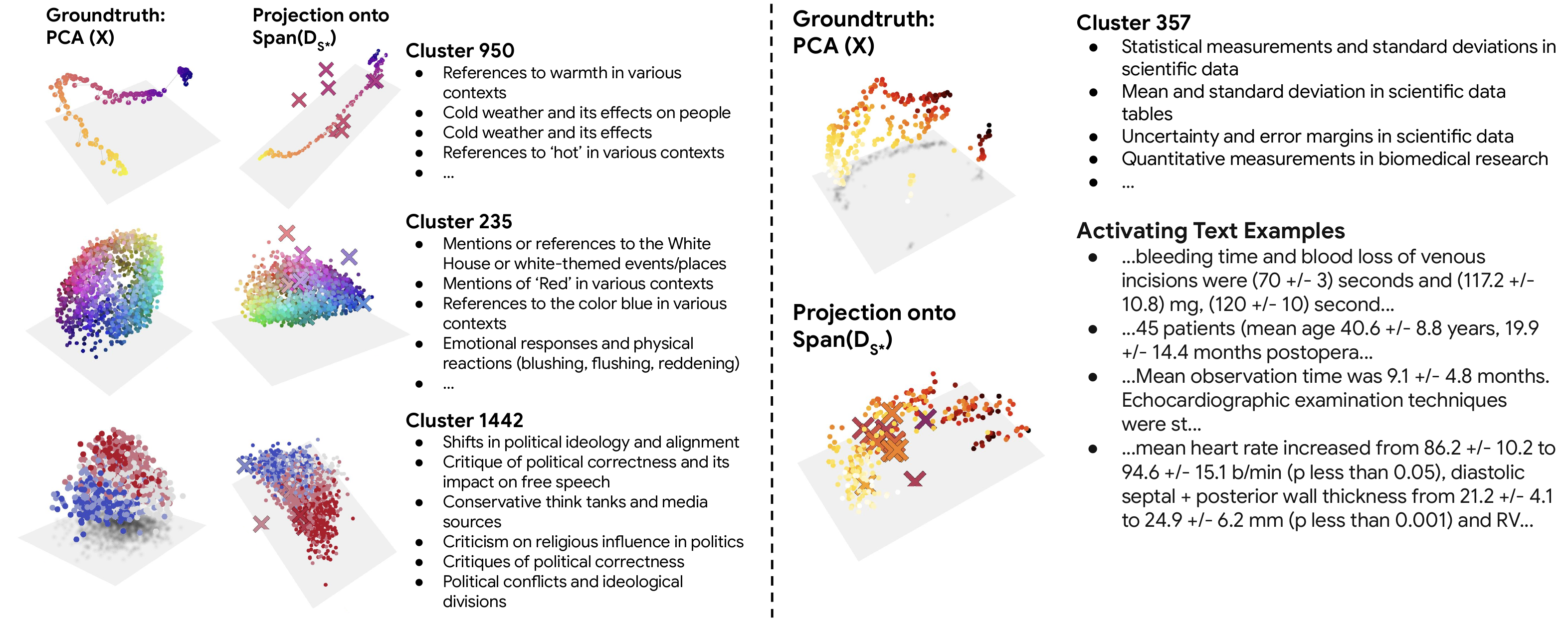}
    \caption{\small{\textbf{Unsupervised Discovery from SAE Codes.} (Left) The Ising-pipeline recovers known manifolds (temperature, colors, political bias) as distinct feature communities. (Right) The same pipeline surfaces a novel manifold encoding epistemic uncertainty in scientific contexts, demonstrating its utility for generating hypotheses beyond known structures.
    }}
    \label{fig:unsup_known}
\end{figure}

\section{Unsupervised Manifold Discovery}
\label{sec:discovery}

The results of Sec.~\ref{sec:characterizing} confirm that SAEs distribute manifold geometry across many localized features. Recovering coherent geometric objects therefore requires post-hoc analysis that groups related atoms without prior knowledge of the underlying manifolds. 
We thus now evaluate candidate grouping strategies and demonstrate that the Ising-model introduced in Sec.~\ref{sec:formalization} transfers from the synthetic setting to real language model representations.

\textbf{Which similarity metric can recover manifold groups.}
A natural starting point is to cluster features by decoder cosine similarity, as explored in prior work~\citep{engels2025}. However, under subspace capture, the atoms spanning a manifold's ambient subspace may be nearly orthogonal, and under tiling, atoms covering adjacent but non-overlapping regions of the manifold need not have similar decoder directions. Decoder geometry thus carries no privileged information about the manifold topology that features collectively tile.
Co-activation statistics offer a more principled alternative: features that jointly represent a manifold fire together, and in the shattering case the features that jointly represent a manifold have extreme inhibition against each other. 
We compare five similarity measures for constructing feature affinity graphs: (\textbf{\textit{i}})~decoder cosine similarity, (\textbf{\textit{ii}})~conditional co-activation probability, (\textbf{\textit{iii}})~Pearson correlation of activation magnitudes, (\textbf{\textit{iv}})~pointwise mutual information, and (\textbf{\textit{v}})~Ising pairwise couplings. To evaluate each metric, we use the supervised feature selection pipeline as ground truth: for three manifolds (colors, days, and temperature), we take the top three features per manifold and compute pairwise similarity under each metric. Fig.~\ref{fig:pca_clustering} (right) visualizes the resulting affinity matrices. A metric succeeds if it produces clear block-diagonal structure with high within-manifold and low cross-manifold similarity. \textbf{Ising couplings and conditional co-activation yield the cleanest separation}, while decoder cosine similarity and Pearson correlation fail to recover the block structure, consistent with the observation that manifold membership is a functional property (which atoms co-activate) rather than a geometric one (where atoms point).

\textbf{Unsupervised discovery pipeline.}
We apply the Ising-pipeline to a BatchTopK SAE trained on Llama3.1-8B (layer 19, expansion $\times 8$, $k = 64$). Fig.~\ref{fig:unsup_known} (left) confirms that the procedure recovers the supervised manifolds identified in Sec.~\ref{sec:qualitative_proof}: temperature, colors, and political bias emerge as distinct communities with coherent geometric structure. 
Beyond recovering known manifolds, the pipeline also surfaces \textit{novel} geometric structures. Fig.~\ref{fig:unsup_known} (right) shows a previously unidentified manifold related to epistemic uncertainty, encoding the degree of measurement error and imprecision in scientific contexts. This demonstrates that the Ising-based discovery pipeline can serve as a tool for unsupervised manifold discovery.

\section{Conclusion}
The presence of structured, nonlinear geometries in model representations suggests that the fundamental unit of interpretation need not be isolated directions.
While sparse autoencoders can, in principle, represent such structures, we show that in practice they do so in a fragmented manner: manifolds are not captured as coherent subspaces, but are instead tiled across many localized, partially redundant features. 
This preserves geometry only implicitly, obscuring it at the level of individual features and limiting the reliability of direction-based interpretability.
Moving forward, we argue that interpretability should be reframed around the recovery and manipulation of geometric structures rather than individual directions. 
This includes both developing featurization methods that explicitly target manifolds, and designing analysis tools that operate on groups of features as coherent units. 

More broadly, our results suggest that understanding neural networks requires shifting from a dictionary of concepts to a geometry of representations---where meaning is encoded not in single atoms, but in the structure they collectively induce.

\section*{Acknowledgments} 

The authors thank Thomas Icard and the Mechanisms team at Goodfire, David Klindt, Aaron Mueller, Demba Ba, Sumedh Hindupur, Valerie Costa, and Ren Makino for helpful discussions during the course of this project.

\bibliography{main}
\bibliographystyle{colm2026_conference}

\clearpage
\appendix

\section{Extended Related Work}
\label{app:related}
\paragraph{Neuroscience.} The existence of such geometry implies higher-order structure on top of individual concepts: groups of features that co-vary continuously, forming coherent geometric objects. This perspective connects to a foundational principle in neuroscience: continuous variables are typically encoded not by single neurons, but by populations of neurons with localized, overlapping receptive fields that collectively tile the underlying space \citep{pouget2000information}. Place cells in the hippocampus tile physical space, each firing in a circumscribed spatial region, so that the animal's location is encoded by the pattern of co-active cells \citep{o1971hippocampus}. Orientation-selective neurons in primary visual cortex tile the space of edge angles \citep{hubel1962receptive}. In many such cases, no single neuron encodes the full concept; rather, the population's joint activity maps out the underlying geometry, and the concept's value can be decoded from the population response \citep{georgopoulos1986neuronal}. If neural network representations recapitulate this coding strategy, one would expect manifolds to be represented by many localized features whose activations tile the geometry, rather than by single features aligned with global directions.

\paragraph{Geometry of Neural population.}
Prior work provides evidence that situates a mixture of Manifold within a tradition of geometric study of neural network. 
First (i), early work found that ReLU-based architectures partition input space into convex polyhedral linear regions (often unbounded). Theoretical analyses of this partition structure have established theoretical results on the number of linear regions~\citep{montufar2014number,telgarsky2015representation,serra2018bounding,raghu2017expressive,balestriero2018spline,balestriero2020mad}. Empirically, prior studies have found that trained networks realize far fewer regions than the maximal theoretical counts~\citep{hanin2019complexity,zhang2020empirical}; related interpretability work have  exploited this polyhedral structure by enumerating regions to extract exact piecewise-linear rules~\citep{black2022interpreting,chu2018exact}. Second (ii), in representation space, recent analyses demonstrate a convex organization of activations and architecture-specific convex projections~\citep{tvetkova2025convex,fel2025into}. This observation dovetails with results in population geometry indicating that network activity concentrates on low-dimensional manifolds with structured variability~\citep{chung2021neural,cohen2020separability,engels2025,sarfati2026shape}.%
Third (iii), in language models, recent work has shown that categorical and hierarchical concepts admit polytopal encodings whose geometric relations mirror semantic relations~\citep{park2024geometry,park2026information}. 
%

\paragraph{Sparse Autoencoders.}
In recent years, SAEs have resurfaced as a popular implementation of sparse coding and dictionary learning to provide concept-level explanations for neural networks \citep{olshausen1997sparse, bricken2023monosemanticity, cunningham2023sparse, gao2024scaling, rajamanoharan2024jumping, bussmann2024batchtopk, fel2025archetypal}. Advancements beyond ReLU SAEs have included TopK \citep{gao2024scaling}, BatchTopK \citep{bussmann2024batchtopk}, and JumpReLU \citep{rajamanoharan2024jumping} nonlinearities. Archetypal SAEs \citep{fel2025archetypal} address the algorithmic instability of SAEs, and Matryoshka SAEs \citep{bussmann2025learning} and MP-SAEs \citep{costa2025flat} learn hierarchical concept dictionaries. TFA \citep{lubana2025priors} and T-SAEs \citep{bhalla2026temporal} incorporate temporal information into dictionary learning methods, allowing for recovery of temporally abstract features.

While the use of SAEs is motivated by the LRH, a growing body of recent work has challenged this assumption by revealing hierarchical concepts, dense, "onion-like" representations, and multi-dimensional contextual representations \citep{wattenberg2024relational, park2024geometry, csordas2024recurrent, engels2025,michaud2025understanding,wollschlager2025geometry}. Along another vein, studies have highlighted the lack of intended causal efficacy of SAEs \citep{wu2025axbench, bhalla2024towards}, their limited utility for probing \citep{kantamneni2025sparseautoencoder,karvonen2024measuring}, and problems with feature splitting and absorption \citep{chanin2024absorption}. Many of these issues can be partially explained by representations being additive mixtures of manifolds and SAEs tiling this representation space, understanding this tiling behavior requires tools from a richer geometric tradition.

\paragraph{Manifold learning \& Subspace clustering.}
\label{app:scc}
A natural framework for understanding how SAEs tile representation space comes from the literature on Manifold Learning and Sparse Subspace Clustering (SSC). Those long-standing line of work has studied how to recover low-dimensional structure from high-dimensional data through sparse reconstruction primitives, and contextualizes our framework. 
Nonlinear manifold learning reconstructs each datapoint from a small set of neighbors on the manifold itself, using either global geometry~\citep{tenenbaum2000global,silva2002global}, local linear patches~\citep{roweis2000nonlinear,vladymyrov2013locally}, spectral embeddings of a neighborhood graph~\citep{belkin2001laplacian,coifman2006diffusion}, curvature-aware local reconstructions~\citep{donoho2003hessian,zhang2004principal}, or topological aggregations~\citep{mueller2022geometric}. 
On the other hand, Subspace clustering and its nonlinear extensions (see this survey by~\citealt{abdolali2021beyond}) instead represent each datapoint as a sparse combination of other datapoints in the same subspace and partition the resulting affinity graph by spectral clustering~\citep{elhamifar2013sparse,liu2010robust,liu2012robust,soltanolkotabi2014robust,you2016scalable,li2017structured}. 
Nonlinear extensions adapt this primitive to data on a union of manifolds via locality preservation and tangent estimation~\citep{elhamifar2011sparse}, kernels~\citep{patel2014kernel}, neural networks~\citep{ji2017deep,li2022neural}, or matching pursuits~\citep{tschannen2018noisy}, though even the deep variants have been shown to be ill-posed under their own assumptions~\citep{haeffele2020critique}. 
A third strand explicitly bridges sparse coding and manifold learning by penalizing locality so that sparse codes select only nearby atoms~\citep{yu2009nonlinear,wang2010locality} or by tying the dictionary to manifold geometry directly~\citep{chen2018sparse}. All of these methods share a common generative assumption: each observation is associated with a single subspace or manifold via a latent label,
\begin{equation}
\label{eq:single-label-dgp}
    \bm{x}_i \;=\; \bm{U}_{\ell(i)} \, \bm{z}_i \;+\; \bm{\epsilon}_i,
    \qquad \ell(i) \in \{1, \dots, c\},
\end{equation}
and the goal is to recover the partition $\ell$. Our additive mixture of manifolds (Defn.~\ref{def:mrh}) departs from~\eqref{eq:single-label-dgp} by allowing each observation to participate in \emph{multiple} manifolds simultaneously, $\bm{x} = \sum_{i \in S} \bm{m}_i$ with $|S| \ll m$. This single change has two consequences that make the prior toolkit inapplicable \emph{as is}: (i) classical self-expression $\bm{x}_i = \bm{X} \bm{c}_i$ cannot be subspace-preserving, because the coefficients must reach across every manifold active in $\bm{x}_i$; and (ii) the unit of clustering shifts from datapoints to dictionary atoms, since no point-level label $\ell(\bm{x})$ exists. Our Ising affinity is therefore an $c \times c$ object over learned features rather than an $n \times n$ object over points, and the spectral-clustering pipeline of nonlinear SC does not transfer to our regime. We see this not as a rejection of the prior literature but as identifying the missing additive ingredient required in our assumptions.

\section{The Ubiquity of Manifolds}
\label{app:manifold_details}                            All manifolds are evaluated using last-token activations from Llama-3.1-8B at layer 19 ($d = 4096$). Table~\ref{tab:manifolds} summarizes each manifold's prompt template, sample count, ground-truth labels, and source dataset.

\begin{table}[h]                                        \centering
\small                                                  \caption{Manifold datasets used for evaluation. All activations are extracted at the last token position.}
\label{tab:manifolds}                                   \begin{tabular}{lllll}
\toprule                                                \textbf{Manifold} & \textbf{Geometry} & \textbf{$n$} & \textbf{Prompt template / source} \\
\midrule 
colors & paraboloid& ${\sim}900$ & ``The hex code \{h\_code\} is for the color'' \\ 
temperature & line & 150 & ``Today it's \{f\}   
degrees Fahrenheit outside'' \\
age & line & 99 & ``They are \{age\} years old.''  \\
geography & hierarchical tree & ${\sim}4{,}000$ & ``The geographical coordinates \{lat, lon\} are in the country of'' \\
days & circle & 420 & ``It's \{time\} on {day}" \\ 
years & helix & 199 & ``The date is {year}'' \\
formality & line & 1{,}000 & \cite{pavlick2016empirical} \\                     
sent\_length & line & 5{,}000 & WikiText \\ 
politic bias & 1D continuous & varies & GPT-5 augmentations of \cite{cajcodes_political_bias} \\ 
\bottomrule                                            \end{tabular}                                         \end{table}        
  
\subsection{Steering Details}
\label{app:steering_details}                            To verify that manifold structure is causally relevant to model behavior, we perform activation patching along the principal components of each manifold's activations (Fig.~\ref{fig:manis}, right).
\paragraph{Setup.} For each continuous manifold, we fit PCA on the cached layer-19 activations, retaining enough components to explain 90\% of      
variance. We then select a base prompt near the manifold's midpoint and construct a sweep by binning the manifold's primary continuous label (e.g., 
fahrenheit for temperature, hue for colors) into 5-10 equal-width bins. For each bin, we compute the PCA centroid and linearly interpolate 5-10 points    
between consecutive centroids, yielding evenly spaced intervention points per manifold.
\paragraph{Intervention.} For each intervention point, we compute the PCA-space delta from the manifold mean, project it back to activation space, 
and add it to the base prompt's layer-19 activation during a forward pass. We then collect next-token logits and track the probabilities of a set of 
target tokens chosen to be semantically diagnostic of the underlying variable. Table~\ref{tab:steering_tasks} lists the task suffix and target tokens
for each manifold.

\begin{table}[h]
\centering
\small
\caption{Steering task configuration per manifold.}
\label{tab:steering_tasks}
\begin{tabular}{lp{3.5cm}p{6cm}}
\toprule
\textbf{Manifold} & \textbf{Task suffix} & \textbf{Target tokens}\\
\midrule
temperature & ``, which is'' & freezing, cold, chilly, nice, warm, hot, burning \\ 
age & `` They are'' & young, middle, old, elderly \\
 years & ``-01-01, which year is it?'' & 17, 18, 19, 20, 21 \\ 
 colors & (none) & red, orange, yellow, green, blue, indigo, purple \\ 
 day & ``. What day is it?'' & Monday, Tuesday, Wednesday, Thursday, Friday, Saturday, Sunday \\
 \bottomrule
\end{tabular}
\end{table}   

\subsection{SAE Training Details}
\label{app:sae_training}
All SAEs are trained on activations from Llama-3.1-8B at layer 19 (residual stream, $d = 4096$). Activations are harvested from 500M tokens of The 
  Pile (uncopyrighted) \citep{gao2020pile} using sequence length 4096.
\paragraph{Optimization.} All architectures use Adam with learning rate $10^{-4}$, no weight decay, and gradient clipping at max norm 1.0. Batch   
  size is 16{,}384 tokens. We use a linear warmup over the first 1
  epochs (though most checkpoints are taken at epoch 2 based on validation VE). Activations are auto-normalized by their mean $\ell_2$ norm prior to   
  training.       
  
\paragraph{Architecture-specific details.}

\begin{itemize}[leftmargin=*]
\item \textbf{TopK / BatchTopK}: Auxiliary loss weight 0.05 with 1
    \item \textbf{JumpReLU}: STE bandwidth $\varepsilon = 0.001$. Target L0 set to match $k$ of other architectures.                                      \item \textbf{Matryoshka}: Nested feature groups with geometrically spaced sizes ($d_\text{sae}/8$, $d_\text{sae}/8$, $d_\text{sae}/4$, remainder). Otherwise same as BatchTopK.
    \item \textbf{Standard ($\ell_1$)}: Sparsity weight $\lambda \in {0.03, 0.04, 0.1}$.
\end{itemize} 

\paragraph{Model selection.} Table~\ref{tab:sae_zoo} lists all trained SAEs. We retain SAEs achieving variance explained (VE) $> 0.85$ on held-out  activations for the main experiments. SpaDE ($\text{VE} \approx 0.62$) and MFA ($\text{VE} \approx 0.55$) are included for architectural comparison despite lower reconstruction quality.                                                                                         \begin{table}[h]
\centering
\small
\caption{SAE configurations. VE is variance explained at epoch 2. SAEs below the VE $> 0.85$ threshold (marked with $\dagger$) are included for    
  architectural comparison only.}                       \label{tab:sae_zoo}                                   \begin{tabular}{llccc}                                \toprule                                              \textbf{Architecture} & \textbf{Expansion} & \textbf{Sparsity} & \textbf{$d_\text{sae}$} & \textbf{VE} \\
  \midrule
BatchTopK & $\times 8$ & $k = 64$ & 32{,}768 & 0.858 \\
BatchTopK & $\times 8$ & $k = 128$ & 32{,}768 & 0.858\\                                                      
BatchTopK & $\times 8$ & $k = 256$ & 32{,}768 & 0.883 \\
BatchTopK & $\times 16$ & $k = 128$ & 65{,}536 & 0.849\\
TopK & $\times 8$ & $k = 64$ & 32{,}768 & 0.852 \\
TopK & $\times 8$ & $k = 128$ & 32{,}768 & 0.861 \\
TopK & $\times 8$ & $k = 256$ & 32{,}768 & 0.862 \\
TopK & $\times 16$ & $k = 64$ & 65{,}536 & 0.845 \\
TopK & $\times 16$ & $k = 128$ & 65{,}536 & 0.848 \\
JumpReLU & $\times 8$ & $k = 64$ & 32{,}768 & 0.841 \\
JumpReLU & $\times 8$ & $k = 128$ & 32{,}768 & 0.846 \\
JumpReLU & $\times 8$ & $k = 256$ & 32{,}768 & 0.867 \\
JumpReLU & $\times 16$ & $k = 64$ & 65{,}536 & 0.853 \\
JumpReLU & $\times 16$ & $k = 128$ & 65{,}536 & 0.852 \\
JumpReLU & $\times 16$ & $k = 256$ & 65{,}536 & 0.879 \\                                                      Matryoshka & $\times 8$ & $k = 64$ & 32{,}768 & 0.847 \\
Matryoshka & $\times 8$ & $k = 128$ & 32{,}768 & 0.851 \\
Matryoshka & $\times 8$ & $k = 256$ & 32{,}768 & 0.875 \\
Matryoshka & $\times 16$ & $k = 128$ & 65{,}536 & 0.854 \\
Matryoshka & $\times 16$ & $k = 256$ & 65{,}536 & 0.887\\
Standard & $\times 8$ & $\lambda = 0.03$ & 32{,}768 & 0.841\\
Standard & $\times 8$ & $\lambda = 0.1$ & 32{,}768 & 0.848\\
Standard & $\times 16$ & $\lambda = 0.03$ & 65{,}536 & 0.867\\
Standard & $\times 16$ & $\lambda = 0.04$ & 65{,}536 & 0.844\\
Standard & $\times 16$ & $\lambda = 0.1$ & 65{,}536 & 0.961\\
\bottomrule                                             \end{tabular}                                           \end{table}

\begin{figure}
    \centering
    \includegraphics[width=1.0\linewidth]{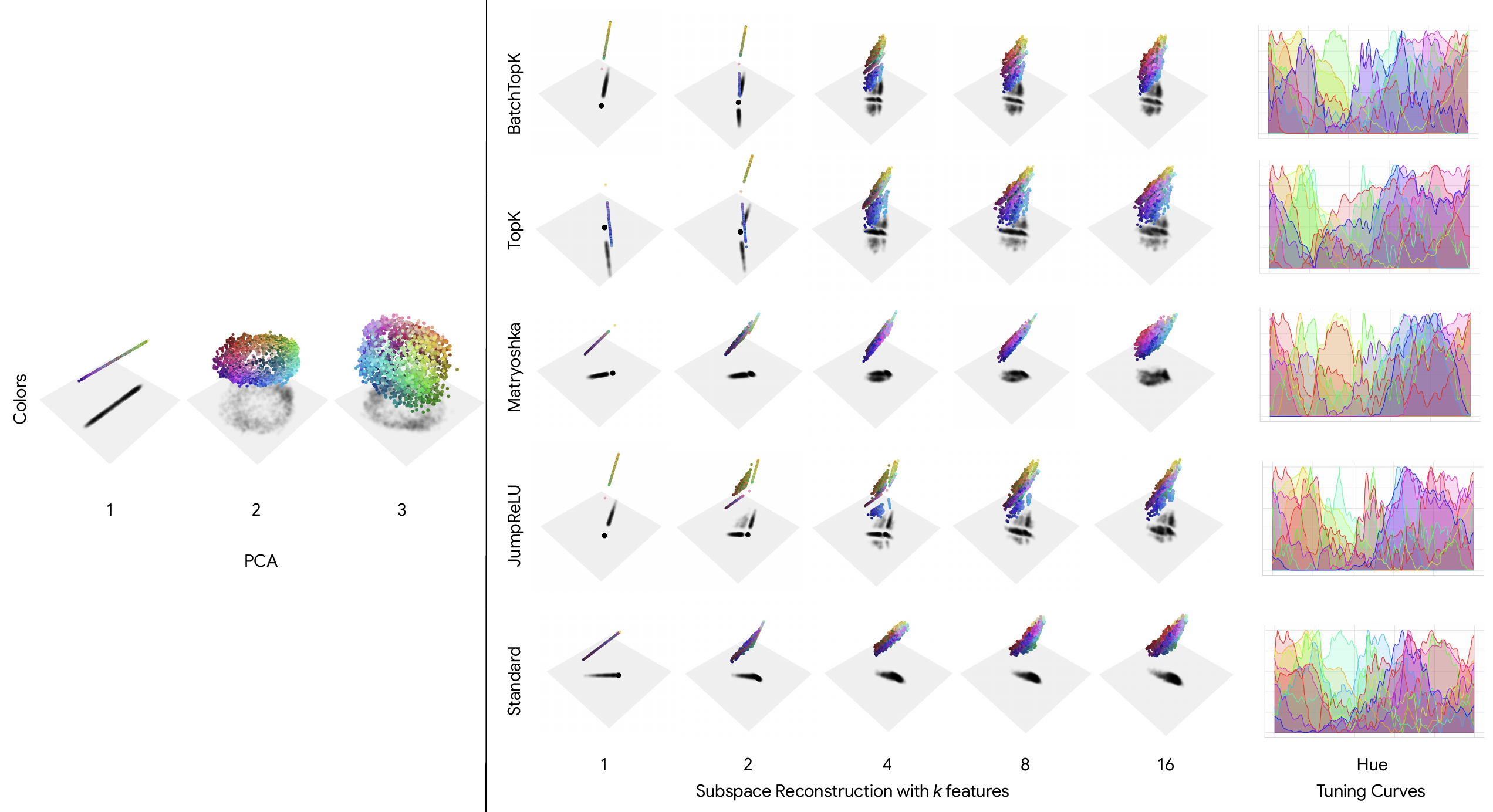}
    \caption{(Left) PCA projections show that manifolds are well-described by a small number of global components encoding semantic variation.
(Middle) SAE reconstructions using increasing numbers of features approximate the manifold in a piecewise-linear fashion, with individual features capturing local regions. (Right) Tuning curves highlight the mixed selectivity of features across the hue dimension of the color manifold.}
    \label{fig:colors}
\end{figure}

\begin{figure}
    \centering
    \includegraphics[width=1.0\linewidth]{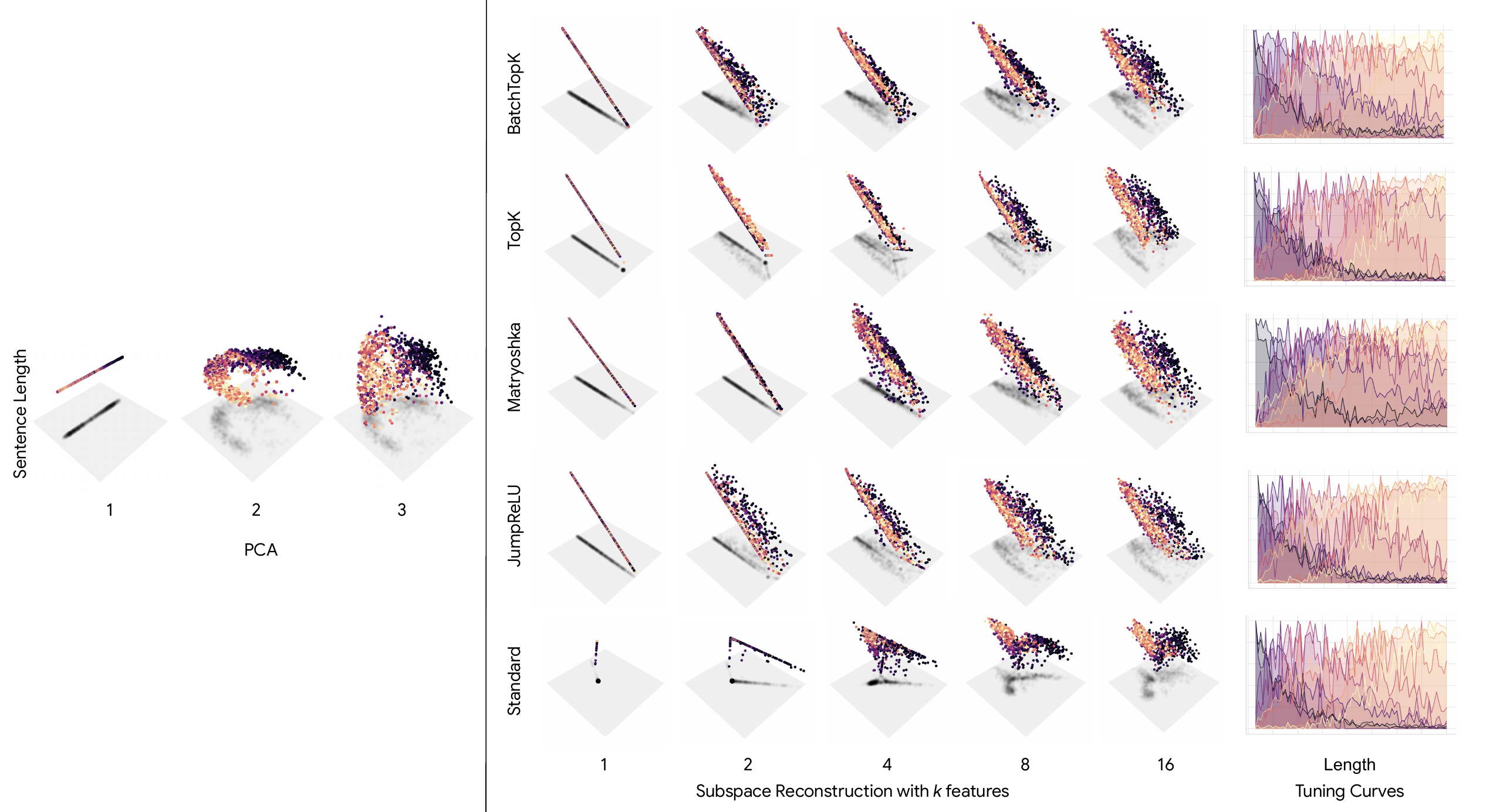}
    \caption{(Left) PCA projections show that manifolds are well-described by a small number of global components encoding semantic variation.
(Middle) SAE reconstructions using increasing numbers of features approximate the manifold in a piecewise-linear fashion, with individual features capturing local regions. (Right) Tuning curves highlight the mixed selectivity of features across sentence length.}
    \label{fig:sent_length}
\end{figure}

\subsection{Platonic Representations?}

We investigate whether different SAEs recover a shared underlying representation of the same manifold. To do so, we compare learned features across models using optimal transport (OT) in three spaces: decoder directions, code activations on random inputs, and code activations restricted to points lying on a given manifold.

Fig.~\ref{fig:feature_alignment} shows weak alignment when comparing decoder directions, particularly for point-based methods such as SpaDE, which differ substantially from direction-based SAEs. Similarly, OT applied to SAE activations on random training data reveals slightly more consistent structure across models, but no clear alignment. These results suggest that individual features are not stable objects: their representation depends strongly on architectural choices and training dynamics, in line with previous work \citep{fel2025archetypal, paulo2025sparse}.

In contrast, when restricting comparison to specific manifolds, we observe strong alignment across SAEs. Despite differences in individual features, the induced coordinate systems over the manifold are highly consistent, indicating that what is preserved across SAEs is not the features themselves, but the geometric structures they collectively encode.

\begin{figure}
    \centering
    \includegraphics[width=1.0\linewidth]{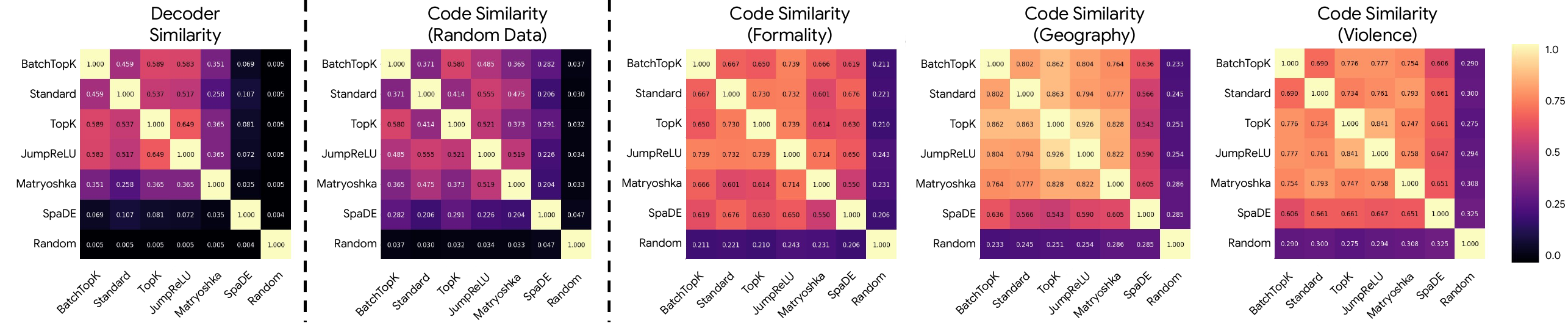}
    \caption{Similarity between features learned by different SAEs measured in decoder space (left), SAE code space (middle and right) for random data (middle) and specific manifolds (right).}
    \label{fig:feature_alignment}
\end{figure}

\clearpage

\section{Duality of Concept Geometry}
\label{app:duality}

\begin{figure}[h]
    \centering
    \includegraphics[width=0.8\linewidth]{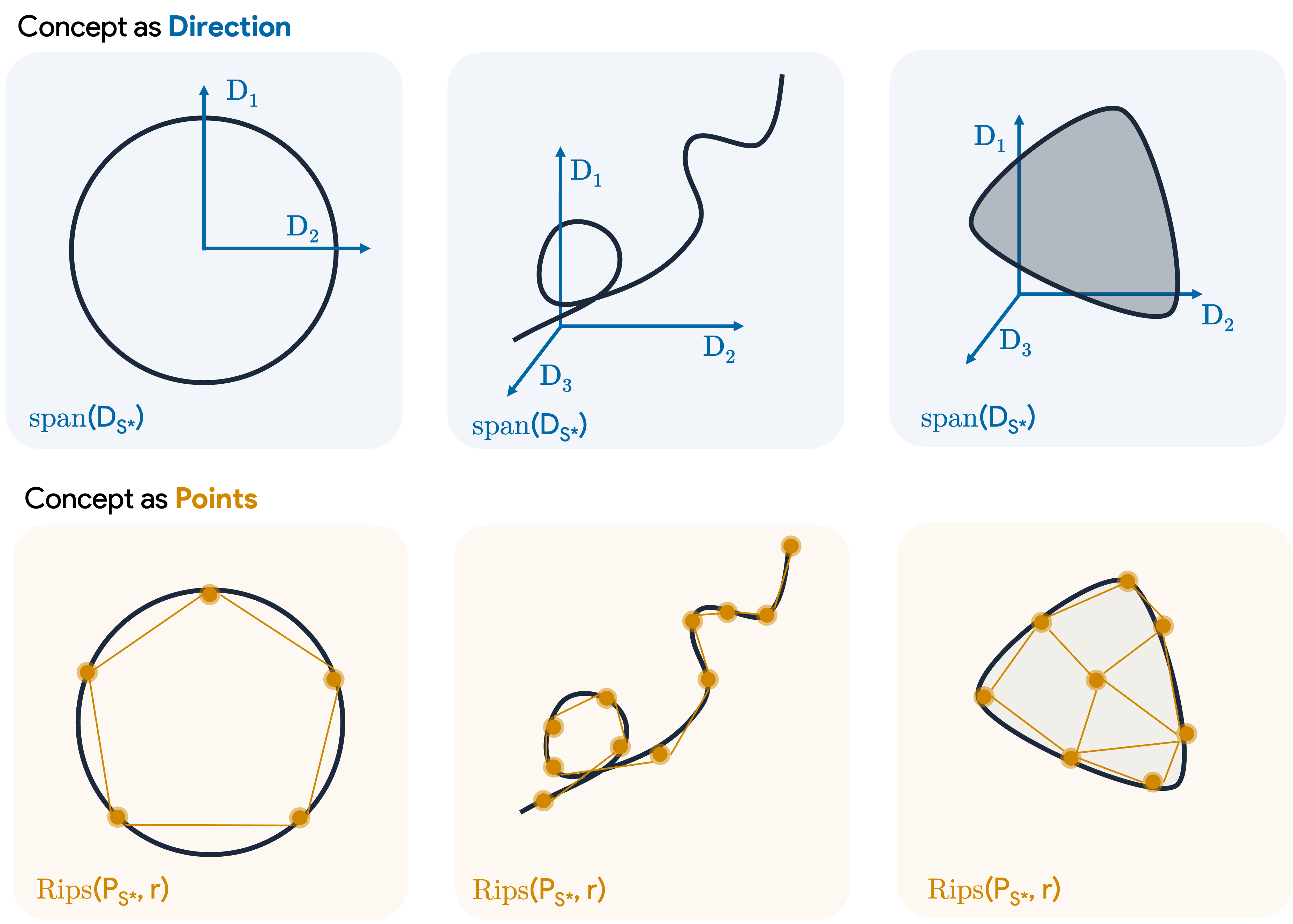}
    \caption{\textbf{The Geometric Duality of Sparse Concepts (Definition \ref{def:concept_duality}).} (Left) \textbf{Concept as Direction}: SAEs capture manifolds extrinsically by finding a fixed atom group whose linear span contains the manifold. \textbf{Concept as Points}: SAEs capture manifolds intrinsically by sampling landmarks that form a Vietoris–Rips complex homotopy equivalent to the original manifold shape. (Right)}
    \label{fig:concept_duality}
\end{figure}

It is now well-established that current approaches for concept recovery are fundamentally instances of sparse dictionary learning. In this work, we mainly studied one particular cases of Dictionary learning that make an implicit assumptions about the geometry of concepts~\citep{hindupur2025projecting}: that concepts are directions. This implicit definition is important for the discussion here, as it strictly dictates the topology of the reconstructed space and, consequently, will define what it \textit{means} mathematically to successfully recover a concept manifold. 
Briefly, we could cluster the recovery methods in 2 groups and use the implicit definition of concept as foundational distinction:

\begin{definition}[Geometric Duality of Sparse Concepts]
\label{def:concept_duality}
Given an activation $\bm{x} \in \mathcal{A}$, sparse dictionary learning extracts a latent representation $\bm{z} \in \mathbb{R}^c$ via a dictionary $\bm{D} \in \mathbb{R}^{c \times d}$ by solving the following optimization:
\begin{equation}
\label{eq:dictionarylearning}
\argmin_{\substack{\bm{z}\in\mathcal{Z}~,~ \bm{D}\in\Omega}} \|\bm{x} - \bm{z}\bm{D}\|_2^2 
+ \lambda \mathcal{R}(\bm{z}) 
\quad \text{s.t.} \quad
\begin{cases} 
\mathcal{Z} = \mathbb{R}^c_+, \quad \Omega = \mathcal{B}^{c \times d} & \text{Concepts as \textbf{Directions}} \\[6pt] 
\mathcal{Z} = \Delta^{c-1}, \quad \Omega = \mathbb{R}^{c \times d} & \text{Concepts as \textbf{Points}} 
\end{cases}
\end{equation}
where $\mathcal{R}(\bm{z})$ is a sparsity-promoting regularizer (e.g., restricting $\|\bm{z}\|_0 \leq k$). 
\end{definition}

AS a recall, the localized reconstructions $\hat{\bm{x}} = \bm{z}\bm{D}$ lie \i{i} under the \textit{directional} paradigm, in a sparse non-negative span (a cone), and we note that contrary to classical SAE, \i{ii} under the \textit{point} paradigm, reconstruction lie strictly within a sparse convex hull (a bounded polytope).
This work already studied what it means to recover a manifold for the first case, we will now study what it means to recover a manifold in the second case: concepts as points.

\subsection{Simplicial recovery}
In the point paradigm, the dictionary atoms serve as
localized landmarks and reconstruction is constrained to their convex hull. 
As both are dictionary matrices are structurally in
$\mathbb{R}^{c \times d}$, we distinguish them notationally by writing $\bm{P}$ for the
dictionary rather than $\bm{D}$.
Because point-based methods reconstruct by interpolating between
landmark positions rather than combining coordinate axes, manifold
recovery is no longer a question of spanning an ambient subspace.
Instead, it requires that the active landmarks form a sufficiently
dense and faithful discrete sample of the underlying geometry.

\begin{definition}[Simplicial capture]\label{def:simplicial_capture}
A point-based SAE with landmarks
$\bm{P} = \{\bm{P}_1, \ldots, \bm{P}_c\}$ captures a manifold
$\mathcal{M}$ (with reach $\tau > 0$) at precision $\varepsilon$ if
the active landmark subset
\[
\bm{P}_{S^\star}
= \bigl\{\bm{P}_i : \exists\, \bm{x} \in \mathcal{M}
\;\text{such that}\; i \in \mathrm{supp}(\bm{z}(\bm{x}))\bigr\}
\]
satisfies $d_H(\bm{P}_{S^\star}, \mathcal{M}) \leq \varepsilon$,
where $d_H$ denotes the Hausdorff distance.
\end{definition}

The Hausdorff condition encodes two requirements: every active
landmark lies within $\varepsilon$ of $\mathcal{M}$ (the landmarks
are close to the manifold), and every point of $\mathcal{M}$ has an
active landmark within $\varepsilon$ (the landmarks cover the
manifold).
Together, $\bm{P}_{S^\star}$ forms a faithful point sample.
Under standard density and reach conditions~\citep{niyogi2008finding},
such a sample is sufficient for the Vietoris--Rips complex built from
$\bm{P}_{S^\star}$ to recover the topology of $\mathcal{M}$,
including its connected components, loops, and higher-order cycles.
When landmarks belonging to multiple manifolds
$\mathcal{M}_1, \ldots, \mathcal{M}_m$ coexist in $\bm{P}$, the
landmark neighborhood graph provides a natural tool for separating
them.
One defines a graph $\mathcal{G}_r(\bm{P})$ with an edge between
$\bm{P}_i$ and $\bm{P}_j$ whenever
$\|\bm{P}_i - \bm{P}_j\| \leq r$; this is the 1-skeleton of
$\mathrm{Rips}(\bm{P}, r)$.
If the manifolds are well separated (inter-manifold distance
$\delta \gg 2\varepsilon$), the per-manifold landmark subsets are
exactly the connected components of $\mathcal{G}_r$ for
$r \in (2\varepsilon,\, \delta - 2\varepsilon)$.
Manifold discovery in this setting reduces to connected component
extraction, or spectral clustering when the separation is less clean.

\paragraph{Factor manifolds versus joint geometry.}
The analysis above assumes that each landmark can be assigned to a
single manifold. Under superposition, this assumption breaks down,
and point-based SAEs face a fundamental obstruction to factorwise
recovery. Consider two observations $\bm{x} = \bm{m}_1 + \bm{m}_2$
and $\bm{x}' = \bm{m}_1' + \bm{m}_2'$, where $\bm{m}_1, \bm{m}_1'
\in \mathcal{M}_1$ and $\bm{m}_2, \bm{m}_2' \in \mathcal{M}_2$.
A point-based SAE reconstructs via convex combinations of landmarks:
$\hat{\bm{x}} = \sum_j z_j \bm{P}_j$ with $z_j \geq 0$ and
$\sum_j z_j = 1$. If two landmarks $\bm{P}_a \approx \bm{x}$ and
$\bm{P}_b \approx \bm{x}'$ are both active, their midpoint
$\tfrac{1}{2}\bm{P}_a + \tfrac{1}{2}\bm{P}_b \approx
\tfrac{1}{2}(\bm{m}_1 + \bm{m}_1') + \tfrac{1}{2}(\bm{m}_2 +
\bm{m}_2')$ is a valid reconstruction. But this point does not
correspond to any observation on the data manifold:
$\tfrac{1}{2}\bm{m}_1 + \tfrac{1}{2}\bm{m}_1'$ is generically not a
point on $\mathcal{M}_1$ (it is a chord, not an arc), and likewise
for $\mathcal{M}_2$. The convex hull of landmarks that tile the joint
manifold $\mathcal{M}_1 + \mathcal{M}_2$ is therefore fundamentally
different from the Minkowski sum $\text{conv}(\mathcal{M}_1) +
\text{conv}(\mathcal{M}_2)$ that would be needed for factorwise
decomposition. In other words, the simplex constraint couples all
factors: one cannot isolate the contribution of $\mathcal{M}_1$ by
selecting a subset of landmarks, because every landmark encodes a
specific joint configuration of all co-occurring concepts. \textbf{The
landmarks tile the joint data manifold as a single object rather than
decomposing it into the separate factor manifolds that generated it}.

\begin{lemma}[Point-based landmarks do not approximate factor manifolds]
\label{lem:point_no_factor}
Let $\mathcal{M}_1 \subset V_1$ and $\mathcal{M}_2 \subset V_2$ be compact 
manifolds contained in orthogonal linear subspaces $V_1, V_2 \subset \mathbb{R}^d$, 
with $\bm 0 \notin \mathcal{M}_2$. Let $\bm{P} = \{\bm{P}_1, \ldots, \bm{P}_c\} 
\subset \mathcal{M}_1 + \mathcal{M}_2$ achieve simplicial capture of the joint 
manifold at precision $\varepsilon$. Then for every landmark $\bm{P}_j$,
\begin{equation}
    d(\bm{P}_j,\, \mathcal{M}_1)
    \;\geq\;
    \inf_{\bm{m}_2 \in \mathcal{M}_2} \|\bm{m}_2\| - \varepsilon.
\end{equation}
In particular, if $\inf_{\bm{m}_2} \|\bm{m}_2\| = \delta > 0$, then every 
landmark is at distance at least $\delta - \varepsilon$ from $\mathcal{M}_1$.
\end{lemma}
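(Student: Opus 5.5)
The plan is a direct orthogonality (Pythagoras) argument, followed by a triangle-inequality step that accounts for the precision $\varepsilon$. The statement already puts each landmark in $\mathcal{M}_1 + \mathcal{M}_2$. Simplicial capture (Defn.~\ref{def:simplicial_capture}) only guarantees that each active landmark is within $\varepsilon$ of the joint manifold. I will therefore prove the bound under this weaker hypothesis, which is where the $-\varepsilon$ slack comes from. So fix a landmark $\bm{P}_j$ and a point $\bm{y} = \bm{a} + \bm{b}$ with $\bm{a} \in \mathcal{M}_1$, $\bm{b} \in \mathcal{M}_2$ and $\|\bm{P}_j - \bm{y}\| \leq \varepsilon$. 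If $\bm{P}_j$ lies exactly in $\mathcal{M}_1+\mathcal{M}_2$, take $\bm{y} = \bm{P}_j$.

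\textbf{Step 1: distance from a point of the sum to $\mathcal{M}_1$.} For an arbitrary $\bm{m}_1 \in \mathcal{M}_1$, write $\bm{y} - \bm{m}_1 = (\bm{a} - \bm{m}_1) + \bm{b}$. Since $V_1$ is a linear subspace containing both $\bm{a}$ and $\bm{m}_1$, the difference $\bm{a}-\bm{m}_1$ lies in $V_1$, while $\bm{b} \in V_2 \perp V_1$. By Pythagoras,
\[
\|\bm{y} - \bm{m}_1\|^2 = \|\bm{a} - \bm{m}_1\|^2 + \|\bm{b}\|^2 \;\geq\; \|\bm{b}\|^2 .
\]
Taking the infimum over $\bm{m}_1$ gives $d(\bm{y}, \mathcal{M}_1) \geq \|\bm{b}\| \geq \inf_{\bm{m}_2 \in \mathcal{M}_2}\|\bm{m}_2\|$. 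Equality in the first inequality is attained at $\bm{m}_1 = \bm{a}$, so in fact $d(\bm{y},\mathcal{M}_1) = \|\bm{b}\|$.

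\textbf{Step 2: transfer to the landmark.} The map $\bm{x} \mapsto d(\bm{x}, \mathcal{M}_1)$ is $1$-Lipschitz. Hence
\[
d(\bm{P}_j, \mathcal{M}_1) \;\geq\; d(\bm{y}, \mathcal{M}_1) - \|\bm{P}_j - \bm{y}\| \;\geq\; \inf_{\bm{m}_2}\|\bm{m}_2\| - \varepsilon ,
\]
which is the claimed inequality. For the ``in particular'' clause: $\mathcal{M}_2$ is compact and $\bm 0 \notin \mathcal{M}_2$, so the infimum $\delta$ is attained and strictly positive, which yields the bound $\delta - \varepsilon$.

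\textbf{Main obstacle.} The only real care needed is in Step 1: orthogonality is applied to the difference $\bm{a} - \bm{m}_1$, not to $\bm{a}$ alone. This requires $V_1$ to be a linear subspace rather than an affine one; if it were only affine with the offsets not orthogonal to $V_2$, the cross term would not vanish. The statement assumes linear subspaces, so the argument goes through. I would also remark that for landmarks lying exactly in $\mathcal{M}_1+\mathcal{M}_2$ the $-\varepsilon$ is unnecessary.
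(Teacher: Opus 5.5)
Your proof is correct and takes essentially the same route as the paper. The paper projects $\bm{P}_j - \bm{m}_1'$ onto $V_2$ to isolate $\bm{m}_2 + \Pi_{V_2}(\bm{r})$ and then applies the triangle inequality; this is the same orthogonality argument as your Pythagoras step followed by your $1$-Lipschitz transfer of the $\varepsilon$ error, and your remark is also right that the hypothesis $\bm{P} \subset \mathcal{M}_1 + \mathcal{M}_2$ lets one take $\bm{r} = \bm{0}$, so the $-\varepsilon$ is not needed.
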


\begin{proof}
By simplicial capture, there exist $\bm{m}_1 \in \mathcal{M}_1$ and 
$\bm{m}_2 \in \mathcal{M}_2$ with $\bm{P}_j = \bm{m}_1 + \bm{m}_2 + \bm{r}$ 
where $\|\bm{r}\| \leq \varepsilon$. Let $\Pi_{V_2}$ denote orthogonal projection 
onto $V_2$. For any $\bm{m}_1' \in \mathcal{M}_1 \subset V_1$, the orthogonality 
$V_1 \perp V_2$ gives $\Pi_{V_2}(\bm{m}_1) = \Pi_{V_2}(\bm{m}_1') = \bm{0}$, hence
\begin{equation}
    \|\bm{P}_j - \bm{m}_1'\| 
    \;\geq\; \|\Pi_{V_2}(\bm{P}_j - \bm{m}_1')\|
    \;=\; \|\bm{m}_2 + \Pi_{V_2}(\bm{r})\|
    \;\geq\; \|\bm{m}_2\| - \varepsilon.
\end{equation}
Taking the infimum over $\bm{m}_1' \in \mathcal{M}_1$ on the left and over 
$\bm{m}_2 \in \mathcal{M}_2$ on the right yields the result.
\end{proof}

\begin{figure}[t]
    \centering
    \includegraphics[width=0.78\linewidth]{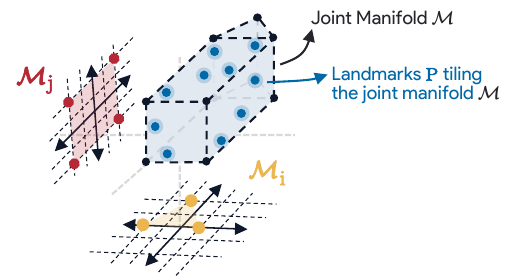}
    \caption{\textbf{Why simplicial capture cannot factor an additive mixture of manifolds.}
    A point-based dictionary tiling the \emph{joint} manifold
    $\mathcal{M} = \mathcal{M}_i + \mathcal{M}_j$ (right) does not
    induce tilings of the individual factors $\mathcal{M}_i, \mathcal{M}_j$
    (left). Each landmark $\bm{P}_k \in \mathcal{M}$ encodes one
    specific joint configuration $(\bm{m}_i, \bm{m}_j)$ of co-active
    factors, and convex combinations of landmarks reach points of
    $\mathcal{M}$ that have no preimage in any single factor. As a
    consequence (Lemma~\ref{lem:point_no_factor}), the landmarks
    cannot lie close to either $\mathcal{M}_i$ or $\mathcal{M}_j$:
    they tile a different geometric object than the factors that
    generated the data.}
    \label{fig:simplicial_obstruction}
\end{figure}

\paragraph{Point-based dictionary learning tiles the joint manifold.}
This observation effectively closes the door, for the purposes of this
paper, on point-based SAEs as a model of \emph{factor} manifold
recovery under superposition. Direction-based SAEs avoid this
obstruction because their reconstructions are additive:
$\hat{\bm{x}} = \sum_j z_j \bm{D}_j$ with no constraint coupling the
coefficients, so the partial reconstruction using only atoms aligned
with $\mathcal{M}_1$ isolates that factor's contribution regardless of
which other factors are simultaneously active. Recovering individual
factor manifolds from point-based SAEs would require replacing the
single simplex constraint with a compositional construction such as a
Minkowski sum~\citep{fel2025into} or a blockwise simplex in which
separate groups of coefficients are independently constrained to
different factors. Such extensions are interesting directions for future
work, but they fall outside the scope of the present paper. Since our
goal is to recover the geometry of individual factors, we focus in the
remainder on direction-based SAEs, for which additive structure makes
factorwise analysis well-posed.

\clearpage
\section{Conditions of Subspace capture}
\label{app:theory:directions}
Before starting, we absorb the affine offset into the SAE bias and work with 
centered points $\tilde{\bm{x}} = \bm{V}\bm{\alpha}$. The hypothesis $\mu < 1/(2k-1)$ 
is the coherence-based Exact Recovery Condition (ERC) of~\citet{tropp2004greedy}, 
which guarantees that for any signal supported on $\substar$, both Orthogonal 
Matching Pursuit and Basis Pursuit recover the correct support. Moreover, by 
Lemma~2.3 of~\citet{tropp2004greedy}, the squared singular values of 
$\bm{D}_{\substar}$ exceed $1-(k{-}1)\mu > 0$, so $\bm{D}_{\substar}$ has full 
row rank with $\|\bm{D}_{\substar}^{+}\|_2 \leq (1-(k{-}1)\mu)^{-1/2}$.

We consider the following idealized encoding setting: the dictionary $\bm{D}$ 
is obtained from SAE training, and representations are then computed by an 
Orthogonal Matching Pursuit (OMP) procedure over the learned dictionary. This 
separates the quality of the dictionary from the behavior of any particular 
feedforward encoder, and allows us to leverage classical sparse recovery 
guarantees~\citep{tropp2004greedy, donoho2003optimally}.

We restate the theorem for convenience.

\begin{theorem}[Subspace recovery]
Let $\M$ lie in a $k$-dimensional affine subspace with orthonormal basis 
$\bm{V} \in \mathbb{R}^{k \times d}$ and offset $\bm{b}_\M$. Let $\bm{D}$ be 
$\mu$-incoherent, and suppose there exists $\substar \subset [c]$ with 
$|\substar| = k$ such that $\mathrm{Im}(\bm{V}) = \mathrm{span}(\bm{D}_{\substar})$ 
and $\mu < 1/(2k-1)$. If the SAE achieves reconstruction error 
$\|\bm{x}_m - \bm{D}\bm{z}(\bm{x}_m)\| \leq \lambda$ on $\M$, then it captures 
$\M$ at precision $O(\lambda)$.
\end{theorem}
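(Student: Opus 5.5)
The plan is to reduce the statement to a stability statement for the idealized encoder of this section: OMP run on the learned dictionary $\bm{D}$. The argument then has two parts, support identification and coefficient stability.

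\textbf{Centering and exact recovery.} After absorbing the offset $\bm{b}_\M$ into the SAE bias, every centered point has the form $\tilde{\bm{x}}_m = \bm{\alpha}\bm{V}$ for some $\bm{\alpha}\in\mathbb{R}^k$. Since $\mathrm{Im}(\bm{V}) = \mathrm{span}(\bm{D}_{\substar})$, we can write $\tilde{\bm{x}}_m = \bm{w}(\bm{x}_m)\bm{D}_{\substar}$ with $\bm{w} = \tilde{\bm{x}}_m \bm{D}_{\substar}^{+}$. So each point on $\M$ is exactly $k$-sparse, with support contained in $\substar$. The hypothesis $\mu<1/(2k-1)$ is the coherence form of Tropp's ERC, so OMP (and Basis Pursuit) run on $\tilde{\bm{x}}_m$ selects only atoms of $\substar$ and terminates with zero residual. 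Hence, in the noiseless idealization, capture holds with precision $0$ and with the same group $\substar$ for every $\bm{x}_m$, which gives coherence across the manifold.

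\textbf{Robustness at level $\lambda$.} The trained SAE only guarantees $\|\bm{x}_m - \bm{z}(\bm{x}_m)\bm{D}\|\le\lambda$, so the statement must hold with $O(\lambda)$ slack. Set $\bm{e} = \bm{x}_m - \bm{z}\bm{D}$ and split the SAE code into $\bm{z}_{\substar}$ and $\bm{z}_{\substar^c}$. Then
\[
\Bigl\|\bm{x}_m - \sum_{i\in\substar} z_i\bm{D}_i\Bigr\| \le \lambda + \|\bm{z}_{\substar^c}\bm{D}_{\substar^c}\|.
\]
The work is to bound the off-support term by $C\lambda$. For the idealized decoder, I would use the noisy-OMP and ERC stability results of Tropp (2004, 2006) and Donoho--Elad. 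The exact-recovery coefficient is
\[
\max_{j\notin\substar}\|\bm{D}_{\substar}^{+}\bm{D}_j\|_1 \le \frac{k\mu}{1-(k-1)\mu} < 1 .
\]
This bound means a residual of norm at most $\lambda$ cannot make OMP prefer an off-support atom by more than a constant factor. Taking the sparse code of the idealized decoder to be the least-squares fit on $\substar$, namely $\bm{w}=\bm{x}_m\bm{D}_{\substar}^{+}$, the restricted residual equals the projection error. That error is at most $\lambda$ by optimality, because the SAE reconstruction $\bm{z}\bm{D}$ can be compared against its own projection onto $\mathrm{span}(\bm{D}_{\substar})$. The coefficient error is then controlled by $\|\bm{D}_{\substar}^{+}\|_2 \le (1-(k-1)\mu)^{-1/2}$, which yields a constant $C = 1 + (1-(k-1)\mu)^{-1/2}\,\mathrm{poly}(k,\mu)$.

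\textbf{Main obstacle.} The hard step is controlling the off-support mass $\bm{z}_{\substar^c}$ for a code that achieves only $\lambda$-reconstruction. Without sparsity, two codes can match $\bm{x}_m$ equally well while differing on $\substar^c$. I would first attempt the bound by using the $\ell_0\le k$ constraint together with the spark bound $\mathrm{spark}(\bm{D}) > 1/\mu > 2k-1$. With this, any two $k$-sparse codes whose reconstructions are $\lambda$-close differ by a $2k$-sparse vector $\bm{\delta}$ satisfying
\[
\|\bm{\delta}\| \le \lambda\,(1-(2k-1)\mu)^{-1/2}.
\]
This forces $\|\bm{z}_{\substar^c}\bm{D}_{\substar^c}\| = O(\lambda)$. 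If this fails, I would state the result only for the OMP decoder and leave the amortization gap to the trained encoder as the discussed caveat.
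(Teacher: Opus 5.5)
Your decisive argument, the $2k$-sparse bound in your last paragraph, follows the paper's skeleton. The paper also writes $\tilde{\bm x}_m=\bm D_{S^\star}\bm c^\star$, sets $\bm\delta=\bm z-\bm c^\star$, and uses $\|\bm D\bm\delta\|\le\lambda$. It then bounds the off-support part $\bm z_{\bar S}=\bm\delta_{\bar S}$ and concludes with the same triangle inequality over unit-norm atoms.

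The difference is the key lemma. The paper cites a ``noise-robust null-space property'' (Tropp 2006, Thm.~14) giving $\|\bm\delta_{\bar S}\|_1\le \frac{C(\mu,k)}{1-(2k-1)\mu}\|\bm D\bm\delta\|$, and states no sparsity assumption on $\bm z$. You instead use $\|\bm z\|_0\le k$, so $\bm\delta$ is $2k$-sparse. Gershgorin on any $2k$ atoms then gives $\|\bm\delta\|_2\le\lambda\,(1-(2k-1)\mu)^{-1/2}$, hence the explicit precision $\lambda\bigl(1+\sqrt{(1+(k-1)\mu)/(1-(2k-1)\mu)}\bigr)$.

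Your route is more elementary and self-contained. It also makes explicit an assumption the paper's step silently needs. Because $\bm D$ is overcomplete, adding $t\bm n$ with $\bm D\bm n=\bm 0$ to $\bm z$ leaves $\|\bm D\bm\delta\|$ unchanged while $\|\bm z_{\bar S}\|_1\to\infty$. So no bound of the paper's form can hold for unrestricted codes. The factor $1-(2k-1)\mu$ in the paper's constant is exactly your $2k$-atom eigenvalue bound. The cited inequality is therefore justified precisely in the regime you assume, which matches the main text's condition that the sparsity budget be aligned with $k$. The quantitative constant comes from Gershgorin, not from the spark bound, which only yields injectivity.

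Two things to cut. First, the least-squares substitution in your middle paragraph does not prove the statement. Subspace capture is defined through the encoder's own coefficients $z_i(\bm x_m)$. Replacing them by $\bm x_m\bm D_{S^\star}^{+}$ yields only the trivial fact that centered points lie in $\mathrm{span}(\bm D_{S^\star})$, with error $0$ and using neither $\mu$ nor $\lambda$. For off-span inputs, the claimed ``at most $\lambda$ by optimality'' does not follow, since $\bm z\bm D\notin\mathrm{span}(\bm D_{S^\star})$.

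Second, the noisy-OMP/ERC remarks are unnecessary. Your $2k$-sparse argument never needs support identification, and under noise that would in any case require a minimum-coefficient condition you have not assumed.
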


\begin{proof}
Since $\mathrm{Im}(\bm{V}) = \mathrm{span}(\bm{D}_{\substar})$, every centered 
point $\tilde{\bm{x}}_m \in \M$ admits a unique representation 
$\tilde{\bm{x}}_m = \bm{D}_{\substar}\bm{c}^\star$ for some 
$\bm{c}^\star \in \mathbb{R}^k$. Let $\bm{z}$ denote the SAE code with support 
$S = \mathrm{supp}(\bm{z})$, residual $\bm{r} = \tilde{\bm{x}}_m - \bm{D}\bm{z}$, 
$\|\bm{r}\| \leq \lambda$, and write $\bar{S} = S \setminus \substar$. Extending 
$\bm{c}^\star$ by zeros to a vector in $\mathbb{R}^c$, define 
$\bm{\delta} = \bm{z} - \bm{c}^\star$. Then
\begin{equation}
    \|\bm{D}\bm{\delta}\| 
    \;=\; \|\bm{D}\bm{z} - \bm{D}_{\substar}\bm{c}^\star\| 
    \;=\; \|\bm{D}\bm{z} - \tilde{\bm{x}}_m\| 
    \;\leq\; \lambda.
\end{equation}
Under the ERC, the noise-robust null-space property of~\citet{tropp2006just} 
(Theorem~14) yields
\begin{equation}\label{eq:nsp_bound}
    \|\bm{\delta}_{\bar{S}}\|_1 
    \;\leq\; \frac{C(\mu, k)}{1 - (2k-1)\mu}\,\|\bm{D}\bm{\delta}\|_2 
    \;=\; O(\lambda),
\end{equation}
where $C(\mu, k)$ depends only on $\mu$ and $k$. Since $\bm{c}^\star$ is 
supported on $\substar$, $\bm{\delta}_{\bar{S}} = \bm{z}_{\bar{S}}$, hence 
$\|\bm{z}_{\bar{S}}\|_1 = O(\lambda)$.

The $\substar$-restricted reconstruction error then satisfies
\begin{align}
    \Bigl\|\tilde{\bm{x}}_m - \sum_{i \in \substar} z_i(\tilde{\bm{x}}_m)\bm{D}_i\Bigr\|
    \;&=\; \|\tilde{\bm{x}}_m - \bm{D}_{\substar}\bm{z}_{\substar}\| \nonumber\\
    \;&=\; \|\bm{D}_{\bar{S}}\bm{z}_{\bar{S}} + \bm{r}\| \nonumber\\
    \;&\leq\; \|\bm{z}_{\bar{S}}\|_1 + \lambda 
    \;=\; O(\lambda),
\end{align}
where the first inequality uses unit-norm atoms and the triangle inequality. 
Thus the SAE captures $\M$ at precision $O(\lambda)$ in the sense of 
Defn.~\ref{def:subspace_capture}.
\end{proof}

\clearpage
\section{Synthetic Experiment Details}
\label{app:synthetic}

\begin{figure}[h]
    \centering
    \includegraphics[width=1.0\linewidth]{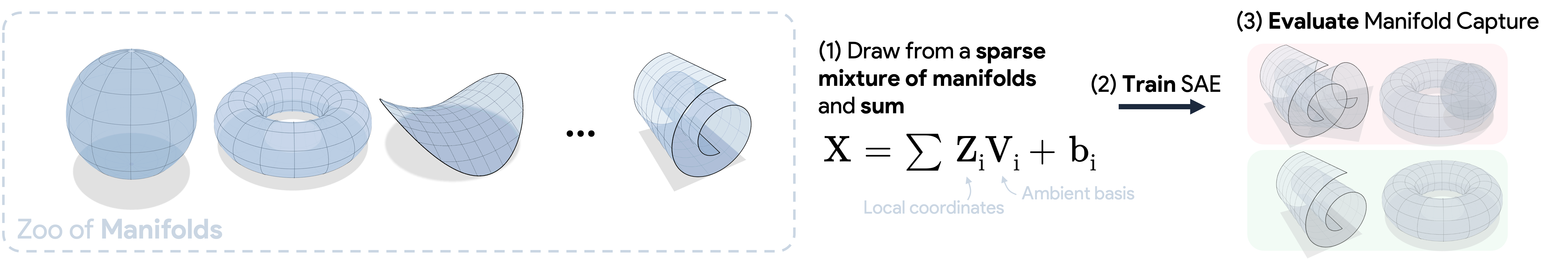}
    \caption{\textbf{Synthetic Evaluation Pipeline.} We construct a controlled benchmark for manifold recovery by sparse autoencoders. \textbf{(1)} We define a zoo of manifolds (spheres, tori, Möbius strips, etc.) and generate data points by sampling from a sparse mixture: each observation is formed as $\bm{X} = \sum_i \bm{Z}_i \bm{U}_i$, where $\bm{Z}_i$ are local coordinates on the $i$-th manifold and $\bm{U}_i$ are ambient basis matrices embedding each manifold into high-dimensional space. \textbf{(2)} An SAE is trained on the resulting superposed activations. \textbf{(3)} We evaluate whether the SAE recovers the individual manifolds from the mixture, assessing both subspace capture (for direction-based SAEs) and simplicial capture (for point-based SAEs) as defined in Sections~\ref{def:subspace_capture} and~\ref{def:simplicial_capture}.}
\label{fig:toy_dataset_presentation}
\end{figure}

\begin{figure}[h]
    \centering
    \includegraphics[width=0.99\linewidth]{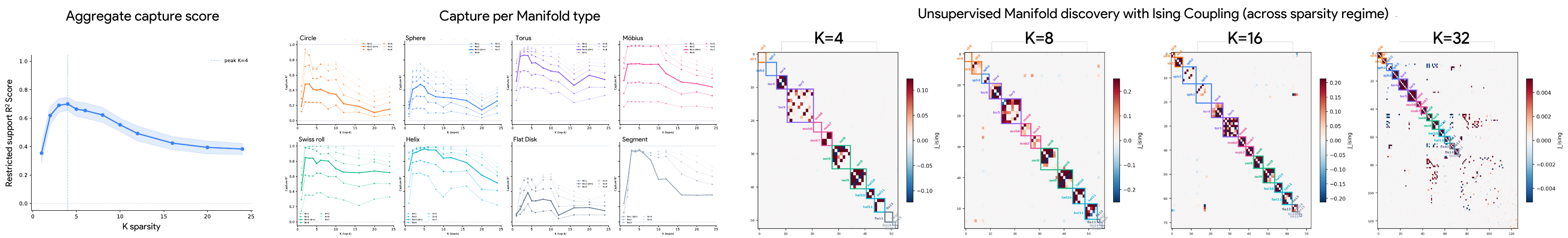}
    \caption{Ising coupling matrix $\bm{J}_{ij}$ recovers latent manifold structure across sparsity regimes. Red = positive coupling ; blue = mutual exclusion ($J<0$). At low $K$ (tiling), atoms are shared across manifolds and block structure is weak. At intermediate $K$ ($K \approx 8$–$16$), clean block-diagonal structure emerges. At high $K$ (dilution), atoms over-tile individual manifolds, fragmenting blocks.
}
    \label{fig:ising_appendix}
\end{figure}

\paragraph{Manifold zoo.}
Table~\ref{tab:manifold_zoo} summarizes the eight manifold types used in the synthetic benchmark. For each type, we list the intrinsic dimension $d_i$ (the number of free parameters), the embedding dimension $k_i$ (the dimension of the ambient subspace containing the manifold, which determines the number of atoms needed for subspace capture), the parametric embedding $\gamma_i$, and the parameter ranges used across variants.

\begin{table}[h]
\centering
\footnotesize
\caption{Manifold zoo used in the synthetic benchmark.}
\label{tab:manifold_zoo}
\setlength{\tabcolsep}{2pt}
\begin{tabular}{lccll}
\toprule
\textbf{Type} & $d_i$ & $k_i$ & \textbf{Embedding} $\gamma_i(\theta) \in \mathbb{R}^{k_i}$ & \textbf{Variant parameters} \\
\midrule
Circle & 1 & 2 & $(r\cos\theta,\; r\sin\theta)$ & $r \in \{0.5, 0.75, 1.0, 1.5, 2.0, 3.0\}$ \\
Sphere & 2 & 3 & $(r\sin\phi\cos\theta,\; r\sin\phi\sin\theta,\; r\cos\phi)$ & $r \in \{0.5, 0.75, 1.0, 1.5, 2.0, 3.0\}$ \\
Torus & 2 & 4 & Clifford: $((R{+}r\cos\phi)\cos\theta,\; \ldots,\; r\sin\phi)$ & $(R,r) \in \{(2,0.5), (2,1), (3,1), \ldots\}$ \\
M\"obius & 2 & 3 & $((1{+}t\cos\tfrac{\phi}{2})\cos\phi,\; \ldots,\; t\sin\tfrac{\phi}{2})$ & $w \in \{0.2, 0.3, 0.5, 0.7, 1.0, 1.5\}$ \\
Swiss roll & 2 & 3 & $(\theta\cos\theta,\; h,\; \theta\sin\theta)$ & $(\theta_{\max}, h_{\max}) \in \{(2\pi, 1.5), \ldots, (4.5\pi, 6)\}$ \\
Helix & 1 & 3 & $(r\cos\theta,\; r\sin\theta,\; \alpha\theta)$ & $\alpha \in \{0.1, 0.2, 0.3, 0.4, 0.5, 0.6\}$, $r{=}1$, 3 turns \\
Flat disk & 2 & 2 & $(r\cos\theta,\; r\sin\theta)$ with $r \sim \sqrt{\mathcal{U}(0,1)}$ & $R \in \{0.5, 0.75, 1.0, 1.5, 2.0, 3.0\}$ \\
Segment & 1 & 1 & $(t)$ & length $\in \{0.5, 0.75, 1.0, 1.5, 2.0, 3.0\}$ \\
\bottomrule
\end{tabular}
\end{table}

The distinction between $d_i$ and $k_i$ is important throughout the paper. The intrinsic dimension $d_i$ governs the manifold's degrees of freedom and determines the expected number of localized detectors in the tiling regime. The embedding dimension $k_i$ determines the number of atoms required for subspace capture (Definition~\ref{def:subspace_capture}): a circle is parameterized by a single angle ($d_i = 1$) but its embedding $(\cos\theta, \sin\theta)$ lives in a 2-dimensional subspace ($k_i = 2$), so two atoms are needed to span it. Similarly, the torus is intrinsically 2-dimensional but requires a 4-dimensional Clifford embedding to faithfully represent its topology.

\paragraph{Normalization.}
A critical design choice is ensuring that all manifold instances contribute equally to the reconstruction loss. Without normalization, manifold types with large embeddings (e.g., the Swiss roll, whose coordinates scale as $\theta \sim 3\pi$ to $4.5\pi$) would dominate the SAE's capacity, while small-norm manifolds (e.g., a circle with $r = 0.5$) would be treated as noise. We address this by centering and isotropically rescaling each instance at construction time. Concretely, for each manifold instance $i$ with parameters $\rho_i$, we draw a calibration sample of 50{,}000 points from the raw embedding $\gamma_i$, compute the sample mean $\bm{\mu}_i$ and the RMS norm of the centered samples $\sigma_i = \sqrt{\mathbb{E}[\|\gamma_i(\theta) - \bm{\mu}_i\|^2]}$, and define the normalized embedding as
\begin{equation}
    \tilde{\gamma}_i(\theta) = \frac{\gamma_i(\theta) - \bm{\mu}_i}{\sigma_i}.
\end{equation}
This transformation is an isotropic rescaling composed with a translation: it preserves all angles, relative distances, curvature ratios, and topological structure. After normalization, every instance has RMS norm exactly 1 in local coordinates, regardless of manifold type or variant parameters.

\paragraph{Ambient embedding.}
For each of the 48 manifold instances (8 types $\times$ 6 variants), we draw a random orthonormal matrix $\bm{V}_i \in \mathbb{R}^{k_i \times d}$ by sampling a $d \times k_i$ Gaussian matrix and taking the $\bm{Q}$ factor of its QR decomposition (transposed to obtain orthonormal rows). This ensures that $\|\bm{z}\bm{V}_i\|_2 = \|\bm{z}\|_2$ for all $\bm{z}$: the ambient embedding is norm-preserving. The bias vectors $\bm{b}_i$ are set to zero throughout (i.e., $\sigma_{\text{bias}} = 0$).

\paragraph{Sparse mixture sampling.}
Observations follow the generative model
\begin{equation}
    \bm{x} = \sum_{i \in S} \tilde{\gamma}_i(\theta_i)\,\bm{V}_i + \bm{\epsilon}, \qquad |S| = L_0,
\end{equation}
where the active set $S$ is drawn uniformly at random (without replacement) from the 48 instances, intrinsic coordinates $\theta_i$ are sampled uniformly on each manifold, and $\bm{\epsilon} \sim \mathcal{N}(\bm{0}, \sigma_\epsilon^2 \bm{I}_d)$ with $\sigma_\epsilon = 10^{-5}$. The noise level is deliberately kept small so that reconstruction quality reflects the SAE's geometric organization rather than denoising ability.

We generate $N = 2{,}000{,}000$ training samples. The evaluation set consists of $1{,}000{,}000$ samples generated at $L_0 = 4$ with a separate random seed, along with the corresponding per-manifold contributions $\bm{m}_i = \tilde{\gamma}_i(\theta_i)\bm{V}_i$ and active masks. This separation ensures that evaluation measures capture on in-distribution superposition with known ground truth.

\paragraph{SAE Training}
We use TopK sparse autoencoders throughout the synthetic experiments. The encoder is a linear map $\bm{W}_{\text{enc}} \in \mathbb{R}^{c \times d}$ followed by TopK selection (retaining only the $k$ largest activations and zeroing the rest). The decoder is a linear map $\bm{W}_{\text{dec}} \in \mathbb{R}^{d \times c}$ with unit-norm columns, applied to the sparse code to produce the reconstruction $\hat{\bm{x}} = \bm{W}_{\text{dec}}\,\text{TopK}(\bm{W}_{\text{enc}}\,\bm{x})$. The dictionary size is $c = 512$ throughout, yielding an expansion factor of $c/d = 4$ relative to the ambient dimension $d = 128$.

We train separate SAEs for each sparsity budget $k \in \{3, 4, 6, 8, 10, 14, 16, 20, 25\}$. This range is chosen to span all three theoretical regimes. 

All SAEs are trained with Adam (learning rate $3 \times 10^{-3}$, no weight decay) for 10 epochs with batch size 1{,}024. The loss function combines $\ell_1$ reconstruction error with a dead-neuron reanimation term.
An atom is considered dead if it has zero activation for every sample in the current batch. The reanimation term encourages dead atoms to develop nonzero pre-activations, preventing capacity waste.

\paragraph{Restricted $R^2$ (subspace capture score).}
The primary metric tests Definition~\ref{def:subspace_capture} directly. For a given SAE trained at sparsity $k$, we proceed as follows:
\begin{enumerate}
    \item Encode the full evaluation set $\{\bm{x}^{(j)}\}$ through the SAE to obtain codes $\{\bm{z}^{(j)}\}$.
    \item For each manifold instance $i$, select the rows where $i$ is active (using the ground-truth active masks) to obtain the manifold-specific codes $\bm{Z}_i \in \mathbb{R}^{n_i \times c}$ and the corresponding true contributions $\bm{M}_i \in \mathbb{R}^{n_i \times d}$.
    \item Greedily select $n$ atoms by iteratively choosing the decoder direction $\bm{d}_j$ that explains the most residual variance of $\bm{M}_i$. At each step, the selected atom's projection is removed from the residual before selecting the next.
    \item Mask the codes to retain only the $n$ selected atoms: $\bm{Z}_i^{(n)} = \bm{Z}_i \odot \bm{e}_{\text{selected}}$, where $\bm{e}_{\text{selected}}$ is a binary mask.
    \item Decode: $\hat{\bm{M}}_i^{(n)} = \bm{Z}_i^{(n)}\,\bm{W}_{\text{dec}}^{\top}$.
    \item Compute the restricted $R^2$:
\end{enumerate}
\begin{equation}
    R^2(i, k, n) = 1 - \frac{\sum_j \|\bm{m}_i^{(j)} - \hat{\bm{m}}_i^{(j,n)}\|^2}{\sum_j \|\bm{m}_i^{(j)} - \bar{\bm{m}}_i\|^2},
\end{equation}
where $\bar{\bm{m}}_i$ is the mean of the true contributions. An $R^2$ near 1 at $n = k_i$ indicates compact subspace capture. We report $R^2$ for $n$ ranging from $\max(1, k_i - 2)$ to $k_i + 2$ to visualize how capture improves around the embedding dimension.

Note that the greedy selection operates on the decoder directions of the trained SAE, not on the codes. This is important: we are asking whether $n$ decoder directions span the manifold's ambient subspace, using the codes the SAE actually produces on in-distribution (superposed) inputs.

\paragraph{Support size.}
For each manifold instance $i$ and SAE sparsity $k$, the support size $|S_{\mathcal{M}}|$ counts the number of unique dictionary atoms that fire on at least 10\% of the manifold's evaluation points. To avoid counting near-zero activations (e.g., from ReLU tails or numerical noise), we apply a per-atom magnitude threshold: for each atom $j$, we compute the 10th percentile of its nonzero activations and discard activations below this threshold. Atoms must additionally fire on at least 30 points (an absolute floor) to be counted. This filtering ensures that the support size reflects genuinely active atoms rather than numerical artifacts.

\paragraph{Receptive field spread.}
For each atom $j$ in the support of manifold $i$, we gather all manifold points where $j$ fires (after the robustness filtering described above) and compute the mean pairwise Euclidean distance among those points in ambient space. This quantity measures how broadly the atom's receptive field covers the manifold. We then take the median across all atoms in the support and normalize by the manifold's own mean pairwise distance (computed from a subsample of up to 2{,}000 points), yielding a dimensionless quantity between 0 (maximally localized: each atom fires on a tight cluster) and 1 (maximally global: each atom fires uniformly across the entire manifold).

\paragraph{Ising coupling inference.}
To recover manifold structure from SAE codes without supervision, we binarize the codes ($s_j = \text{sign}(z_j)$) and fit a pairwise Ising model
\begin{equation}
    p(\bm{s}) \propto \exp\left(\sum_{i < j} J_{ij}\,s_i\,s_j + \sum_i h_i\,s_i\right)
\end{equation}
using pseudo-likelihood maximization (PLM) with L-BFGS optimization. We enforce symmetry by setting $\bm{J} = (\bm{W} + \bm{W}^{\top})/2$ during optimization rather than as a post-hoc correction. Regularization strength is selected via the extended Bayesian information criterion (EBIC) with $\gamma = 0.5$, following the IsingFit procedure of \cite{van2014new}. The fields $h_i$ absorb marginal firing rates, so universally active atoms have large $|h_i|$ but small $|J_{ij}|$, and indirectly correlated atoms are factored out by construction. We then apply Louvain community detection to $|\bm{J}|$ to partition atoms into candidate manifold groups, and validate each group by checking for a sharp PCA spectral gap in its code vectors (indicating low-dimensional structure consistent with a manifold).

\clearpage
\section{Recovering Manifold Structure via Ising Model}
\label{app:ising}

Recovering the manifold partition from SAE codes can be cast as a problem of graphical model selection over dictionary atoms.
The goal is to infer which atoms are structurally related (jointly tile or span the same manifold) and which are independent, using only the observed pattern of activations.
This section formalizes the connection, states the conditions under which recovery succeeds, and motivates the two-stage pipeline used in the main text.

\paragraph{From Covariance to Conditional Independence}

A natural first approach to grouping atoms is to examine their pairwise covariance.
However, covariance conflates direct and indirect statistical dependencies: two atoms may be correlated not because they tile the same manifold, but because a third atom (or a latent variable such as topic) mediates their interaction~\citep{dempster1972covariance}.
The classical remedy is to examine the precision matrix $\bm{\Omega} = \bm{\Sigma}^{-1}$ instead.
For jointly Gaussian random variables, the precision matrix encodes the conditional independence structure exactly: $\Omega_{ab} = 0$ if and only if $z_a \perp\!\!\!\perp z_b \mid \bm{z}_{\setminus\{a,b\}}$~\citep{lauritzen1996graphical}.
Estimating $\bm{\Omega}$ from data is the subject of Gaussian graphical model selection, with well-studied algorithms such as the graphical lasso~\citep{friedman2008sparse} and neighborhood selection~\citep{meinshausen2006high}.

SAE codes, however, are not Gaussian.
They are sparse (most entries zero), non-negative (due to ReLU or TopK), and their support is constrained (TopK enforces $\|\bm{z}\|_0 = k$).
Applying Gaussian graphical model selection to such data would yield inconsistent estimates of the conditional independence structure, since the Gaussian likelihood is misspecified.
We therefore require a graphical model adapted to the discrete, binary nature of SAE activation patterns.

\paragraph{The Ising Model as a Binary Graphical Model}

A principled alternative is to binarize the codes, setting $s_a = \text{sign}(z_a)$, and model their joint distribution with a pairwise exponential family.
For binary random variables, the maximum-entropy distribution consistent with observed first and second moments $\mathbb{E}(s_a)$ and $\mathbb{E}(s_a s_b)$ is the \emph{Ising model}~\citep{jaynes1957information, wainwright2008graphical}:
\begin{equation}
\label{eq:ising_app}
p(\bm{s}) \;=\; \frac{1}{Z(\bm{J}, \bm{h})} \exp\!\Bigl(\sum_{a < b} J_{ab}\, s_a s_b \;+\; \sum_a h_a\, s_a\Bigr),
\end{equation}
where the couplings $\bm{J} \in \mathbb{R}^{c \times c}$ parameterize pairwise interactions, the fields $\bm{h} \in \mathbb{R}^c$ capture marginal activation rates, and $Z(\bm{J}, \bm{h})$ is the partition function.
This is a well-studied model in statistical physics~\citep{ising1925beitrag}, computational neuroscience~\citep{schneidman2006weak, cocco2009neuronal}, and machine learning~\citep{wainwright2008graphical}.
Its use for modeling neural co-activation statistics was pioneered by Schneidman et al.~\citep{schneidman2006weak}, who showed that pairwise interactions account for the vast majority of multi-neuron correlations in biological neural populations.

The key property connecting the Ising model to manifold recovery is the following classical result.

\begin{proposition}[Pairwise Markov property of the Ising model; \citet{besag1974spatial}]
\label{prop:hc}
Let $p(\bm{s}) > 0$ for all $\bm{s} \in \{0,1\}^c$.
Then the distribution $p$ factorizes as in Eq.~\eqref{eq:ising_app} if and only if
\begin{equation}
J_{ab} = 0 \quad \iff \quad s_a \perp\!\!\!\perp s_b \mid \bm{s}_{\setminus\{a,b\}}.
\end{equation}
That is, the support of $\bm{J}$ encodes exactly the conditional independence graph of the distribution.
\end{proposition}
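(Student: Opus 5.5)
We read the proposition as the standard pairwise Markov statement: the positivity hypothesis makes $\log p$ well defined, and the claim is that for a positive distribution of the form \eqref{eq:ising_app}, the coupling $J_{ab}$ vanishes exactly when $s_a$ and $s_b$ are conditionally independent given the remaining spins. The \emph{factorizes} clause is then a framing condition. The plan is to make this precise first and then prove the equivalence by a direct computation on $2\times 2$ conditional tables. We will not invoke the full Hammersley--Clifford machinery.

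\textbf{Step 1 (uniqueness of the parametrization).} Any strictly positive $p$ on $\{-1,1\}^c$ admits a unique multilinear (Walsh--Fourier / M\"obius) expansion
\[
\log p(\bm{s}) = \sum_{A \subseteq [c]} \beta_A \prod_{a \in A} s_a ,
\]
because the characters $\prod_{a\in A} s_a$ form a basis of functions on the hypercube. Thus $p$ has the form \eqref{eq:ising_app} if and only if $\beta_A = 0$ for all $|A| \geq 3$, and then $J_{ab} = \beta_{\{a,b\}}$ is uniquely determined. This step shows that ``$J_{ab}$'' is a well-defined functional of $p$, which is needed for the biconditional to make sense. The same argument works for the $\{0,1\}$ encoding written in the proposition, after the affine change $s \mapsto 2s-1$, which only rescales $J$ by a nonzero constant.

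\textbf{Step 2 (conditional $2\times 2$ table).} Fix a configuration $\bm{s}_{\setminus\{a,b\}}$. Dividing \eqref{eq:ising_app} by the marginal of the rest gives
\[
p(s_a,s_b \mid \bm{s}_{\setminus\{a,b\}}) \propto \exp\bigl(J_{ab}s_as_b + \eta_a s_a + \eta_b s_b\bigr),
\]
where $\eta_a = h_a + \sum_{e\neq a,b} J_{ae}s_e$ and $\eta_b$ is defined symmetrically. The local fields $\eta_a,\eta_b$ depend on the conditioning configuration, but the interaction term does not. Computing the log cross-ratio of this positive $2\times 2$ table, the $\eta$ terms cancel, leaving
\[
\log\frac{p(1,1\mid\cdot)\,p(-1,-1\mid\cdot)}{p(1,-1\mid\cdot)\,p(-1,1\mid\cdot)} = 4J_{ab},
\]
independent of the conditioning configuration.

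\textbf{Step 3 (conclude).} For a strictly positive $2\times 2$ table, independence holds if and only if the odds ratio equals $1$. One direction is immediate. For the other, an odds ratio of $1$ forces the table to have rank one, so it factorizes as a product of its marginals. Hence $s_a \perp\!\!\!\perp s_b \mid \bm{s}_{\setminus\{a,b\}}$ for every conditioning configuration if and only if $J_{ab}=0$. Since the cross-ratio does not depend on the conditioning values, ``for some configuration'' and ``for all configurations'' coincide, which gives the stated equivalence.

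\textbf{Main obstacle.} The main difficulty is logical rather than computational: the statement as written conflates ``factorizes as Ising'' with the coupling/independence equivalence. Step 1 resolves this by fixing $J$ uniquely via the multilinear expansion, after which Steps 2 and 3 are routine algebra. Positivity is used in two places: to take logarithms in Step 1, and in the odds-ratio criterion in Step 3.
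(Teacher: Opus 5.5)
Your proof is correct under your reading, and it takes a different route from the paper. The paper gives no proof; it cites Besag (1974) and calls the result a special case of Hammersley--Clifford.

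That citation buys generality. For positive distributions, being Markov with respect to a graph is equivalent to factorizing over its cliques, and positivity is exactly what powers the hard Markov-to-factorization direction. But Hammersley--Clifford does not give the biconditional as literally stated. A positive $p$ that is Markov for a graph with triangles may carry higher-order clique terms, so it need not be of the form \eqref{eq:ising_app}. Your choice to take the Ising form as a hypothesis and prove that $\mathrm{supp}(\bm{J})$ is the conditional independence graph is therefore not merely convenient; it is forced. Define $J_{ab}$ as the pairwise Walsh coefficient, as in your Step 1. Then the positive distribution with $\log p = s_1s_2+s_1s_3+s_2s_3+s_1s_2s_3-\log Z$ satisfies the inner equivalence for every pair, since both sides are false, yet it is not Ising. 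So the outer ``if'' direction fails, and you could state this explicitly rather than only calling the statement conflated.

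Your direct computation proves exactly what the paper actually uses, and it supplies two things the citation hides. First, Step 1 gives identifiability of $\bm{J}$ from $p$; without it, ``$J_{ab}=0$'' is not a property of the distribution. Second, you show that $4J_{ab}$ equals the conditional log odds ratio and is constant across conditioning configurations. That identity is what licenses the paper's later reading of $\operatorname{sign}(J_{ab})$ as co-firing versus mutual exclusion in Definition~\ref{def:ising_regimes}. It is also the conditional structure that the pseudo-likelihood fit relies on.

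Your Step 2 also extends verbatim to an arbitrary positive $p$. The conditional log odds ratio equals $4\sum_{A\supseteq\{a,b\}}\beta_A\prod_{e\in A\setminus\{a,b\}}s_e$, so the conditional independence graph is the 2-section of the Walsh support. That is the binary Hammersley--Clifford statement itself, recovered by elementary means.
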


This result, a special case of the Hammersley-Clifford theorem~\citep{besag1974spatial, lauritzen1996graphical}, establishes that the Ising couplings play the same role for binary variables that the precision matrix plays for Gaussian variables: $J_{ab} = 0$ if and only if atoms $a$ and $b$ are conditionally independent given all others.
Fitting the Ising to binarized SAE codes is therefore the construction prescribed by the theory of undirected graphical models for binary data~\citep{wainwright2008graphical}.

\section{A Geometric and Statistical View of Tiling}
\label{app:ising_regimes}

The empirical results of Sec.~\ref{sec:characterizing} establish that trained SAEs 
rarely operate in the capture regime: rather than reusing a compact group of atoms 
across an entire manifold, they fragment $\M$ into many partially overlapping 
receptive fields. This fragmented allocation is what we call \emph{tiling}, and it 
admits two qualitatively different geometric forms. Before giving formal 
definitions through Ising couplings, we describe the underlying picture.

\paragraph{Tiling by individual atoms: Shattering}
The cleanest form of tiling allocates a single atom to each region of $\M$. Each 
atom $\bm{d}_i$ activates on a localized patch $\mathcal{P}_i \subseteq \M$, and the patches 
$\{\mathcal{P}_i\}_{i \in G_\M}$ partition the manifold with little overlap. On any input 
$\x_m \in \M$, exactly one (or few compared to the ambiant space) of the group's atoms fires; moving 
along $\M$ corresponds to handing off activity from one atom to the next, like 
the firing of place cells as an animal traverses an environment~\citep{o1971hippocampus}.
The total number of atoms used to represent $\M$ is therefore proportional to the 
volume of $\M$ in atom-units, not to its ambient dimension $k_\M$: a 
larger manifold simply needs more tiles.

\paragraph{Tiling by group of atoms: Dilution}
Dilution is the same idea applied at the level of groups rather than individuals. 
Instead of one atom firing per region, a small subset $G_{\x} \subseteq G_\M$ fires 
on each $\x_m \in \M$, and the subsets $G_{\x}$ vary smoothly (and overlap 
heavily) as $\x_m$ moves along $\M$. No single $G_{\x}$ is small enough to 
satisfy capture at the manifold scale, but each is locally redundant: many atoms 
encode the same region in parallel. The total support $G_\M$ is then 
much larger than the ambient dimension $k_\M$, often by an order of 
magnitude, because every region is covered by several atoms rather than one.

\paragraph{Why the distinction matters.}
Both regimes preserve $\M$'s geometry implicitly through joint activity, and both 
are consistent with low reconstruction error. Geometrically, however, they 
correspond to opposite assumptions about what an atom is for: shattering treats 
atoms as landmarks (point-like detectors with sharp receptive fields), 
while dilution treats them as local redundant basis (overlapping linear 
contributions whose sum reconstructs the local geometry). Shattering is closer 
in spirit to the point paradigm of App.~\ref{app:duality}; dilution is closer to 
the directional paradigm but with the wrong sparsity budget. Distinguishing them 
is essential because they will exhibit different statistical signature and thus recovering the manifold will require different strategies.

\paragraph{Statistical signatures.}
The two regimes leave distinct fingerprints in the joint activation statistics 
of $G_\M$. Under shattering, the single-atom-per-region structure means atoms 
are mutually exclusive: when $a$ fires, $b \neq a$ does not, and vice 
versa. Under dilution, atoms within $G_{\x}$ are positively coupled (they fire 
together on the same region), but atoms in disjoint $G_{\x}$ and $G_{\x'}$ 
inhibit each other. Capture, in contrast, places all of $G_\M$ inside every 
$G_{\x}$: every pair co-fires on every input. The Ising model makes these 
qualitative claims precise.

\begin{definition}[Ising signatures of capture, shattering, and dilution]
\label{def:ising_regimes}
Let $G \subseteq [c]$ be a group of atoms hypothesized to represent a manifold 
$\M$. Define the \emph{signed cohesion}
\begin{equation}
\label{eq:signed_cohesion}
\rho(G) \;=\; \frac{1}{\binom{|G|}{2}} \sum_{\substack{a, b \in G \\ a < b}} 
\operatorname{sign}(J_{ab}) \;\in\; [-1, +1].
\end{equation}
For thresholds $\tau \in (0, 1]$, $G$ is in the:
\begin{itemize}[leftmargin=*]
    \item \textbf{Capture regime} if $|G| \approx k_\M$ and $\rho(G) \geq +\tau$. 
    \hfill(all atoms co-fire)
    \item \textbf{Shattering regime} if $|G| \gg k_\M$ and $\rho(G) \leq -\tau$. 
    \hfill (atoms mutually exclude)
    \item \textbf{Dilution regime} if $|G| \gg k_\M$ and $|\rho(G)| < \tau$. 
    \hfill (mixed couplings, overlapping sub-groups)
\end{itemize}
\end{definition}

The key auxiliary quantity is $|G|$ itself: capture requires $|G|$ to be small 
(no more than the ambient dimension), while both forms of tiling require it to 
be large. Within the tiling regimes, the sign of the couplings then 
distinguishes the geometric mechanism.

\paragraph{Implications for SAE interpretability.}
Two consequences follow. First, both ordered states (capture and shattering, ferromagnetic and 
antiferromagnetic respectively) yield well-defined communities under spectral clustering of 
$|\bm{J}|$, while the disordered state (dilution) does not --- consistent with our empirical 
finding that recovering manifolds in the dilution regime requires the additional 
spectral-gap validation step of Sec.~\ref{sec:discovery}. 
Second, the framing identifies dilution as the genuinely problematic regime for interpretability: 
it preserves geometry but does so without any coherent organizational 
principle. The empirical evidence assembled in Sec.~\ref{sec:characterizing} 
restricted-$R^2$ curves that plateau well beyond $k_\M$, tuning curves that  fragment manifolds across many partially redundant features, and Ising couplings whose intra-group sign structure is mixed rather than uniformly positive or negative consistently places trained SAEs in this regime. 
We therefore read current SAEs not as failed instances of capture, but as diluted representations: they do encode manifold geometry, but distribute it across a redundant cover whose organizational logic is invisible at the level of any single feature, and only partially recoverable through 
post-hoc analysis we intended to develop in Sec.~\ref{sec:discovery}.

\end{document}